\PassOptionsToPackage{table,dvipsnames}{xcolor}
\PassOptionsToPackage{numbers,sort&compress}{natbib}
\documentclass{article} %
\usepackage{iclr2027_conference,times}

\usepackage{amsmath,amsfonts,bm}

\def\eqref#1{equation~\ref{#1}}

\def\1{\bm{1}}

\DeclareMathAlphabet{\mathsfit}{\encodingdefault}{\sfdefault}{m}{sl}
\SetMathAlphabet{\mathsfit}{bold}{\encodingdefault}{\sfdefault}{bx}{n}

\usepackage{url}
\usepackage{wrapfig}
\usepackage{tikz}
\usepackage{amsmath,amssymb,amsfonts,amsthm}
\usepackage{algorithm}
\usepackage{graphicx}
\usepackage{textcomp}
\usepackage{bm}
\usepackage{algpseudocode}
\usepackage{fontawesome5}
\newcommand{\taskimage}[2]{%
\begin{tikzpicture}
    \node[anchor=south west, inner sep=0] (image) at (0,0) {%
        \includegraphics[width=\linewidth]{#1}%
    };
    \node[
        anchor=north west,
        font=\bfseries\scriptsize,
        fill=white,
        fill opacity=0.8,
        text opacity=1,
        inner sep=1.5pt
    ] at ([xshift=2pt,yshift=-2pt]image.north west) {#2};
\end{tikzpicture}%
}

\usepackage{booktabs}
\usepackage{multirow}
\usepackage{makecell}
 \usepackage{subcaption}
\usepackage[table,dvipsnames]{xcolor}
\definecolor{LightBlue}{HTML}{C9E9F6}

\newcommand{\ourscell}[1]{\cellcolor{LightBlue}#1}

\definecolor{CiteBlue}{RGB}{30, 90, 180}
\definecolor{RefRed}{RGB}{170, 40, 60}
\definecolor{UrlTeal}{RGB}{0, 130, 130}

\definecolor{myblue}{HTML}{71B5F5}
\definecolor{myred}{HTML}{F77189}
\usepackage{pifont}

\usepackage[
    colorlinks=true,
    citecolor=CiteBlue,
    linkcolor=RefRed,
    urlcolor=myblue,
    bookmarks=true
]{hyperref}
\usepackage{cleveref}

\Crefname{figure}{Figure}{Figures}
\Crefname{figure}{Figure}{Figures}

\newtheorem{proposition}{Proposition}

\newcommand{\fhat}{\widehat{f}}
\newcommand{\policy}{\pi_{\mathrm{pre}}}

\def\BibTeX{{\rm B\kern-.05em{\sc i\kern-.025em b}\kern-.08em
    T\kern-.1667em\lower.7ex\hbox{E}\kern-.125emX}}

\newcounter{todos}
\AtEndDocument{\ifnum\value{todos}>0 \PackageWarning{TODOS}{There are \arabic{todos} todos left in this paper! Fix them before submitting the paper!} \fi}

\title{\textsc{DiVeR}: Decision-Critical Verifier Learning for VLA Test-Time Scaling}

\ificlrfinal
\else
\fi

\author{} 

\usepackage{diver-msr-layout}
\renewcommand{\reportauthors}{%
  \mbox{Seongheon Park\textsuperscript{1,2,*}}\quad
  \mbox{Heecheol Kim\textsuperscript{2}}\quad
  \mbox{Shulin Tian\textsuperscript{3}}\quad
  \mbox{Lilika Makabe\textsuperscript{4}}\\[1mm]
  \mbox{Namiko Saito\textsuperscript{2}}\quad
  \mbox{Katsushi Ikeuchi\textsuperscript{5}}\quad
  \mbox{Sharon Li\textsuperscript{1}}\quad
  \mbox{Yasuyuki Matsushita\textsuperscript{2}}}
\newcommand{\affiliationitem}[3]{%
  \mbox{\textsuperscript{#1}\hspace{0.4mm}%
    \raisebox{-0.8mm}{\includegraphics[width=3.5mm,height=3.5mm]{#2}}%
    \hspace{0.7mm}#3}}
\renewcommand{\reportaffiliations}{%
  \affiliationitem{1}{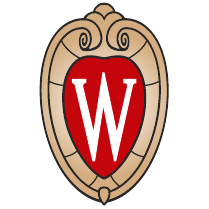}{University of Wisconsin--Madison}\hspace{3mm}%
  \affiliationitem{2}{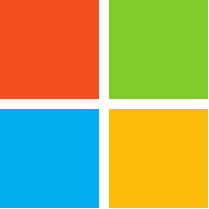}{Microsoft Research Asia--Tokyo}\\[1.0mm]
  \affiliationitem{3}{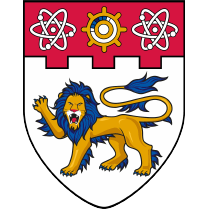}{Nanyang Technological University}\hspace{3mm}%
  \affiliationitem{4}{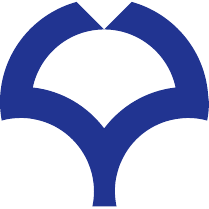}{The University of Osaka}\hspace{3mm}%
  \affiliationitem{5}{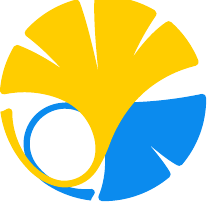}{The University of Tokyo}}
\renewcommand{\reportauthornote}{%
  \textsuperscript{*}Work done during an internship at Microsoft Research Asia--Tokyo.}
\renewcommand{\reportcorrespondence}{%
  Seongheon Park (\href{mailto:seongheon_park@cs.wisc.edu}{\nolinkurl{seongheon_park@cs.wisc.edu}})}

\begin{document}

\addtocontents{toc}{\protect\setcounter{tocdepth}{-1}}

\maketitle

\begin{abstract}

Scaling robot data and model capacity has improved Vision-Language-Action (VLA) policies, but further progress is constrained by the high cost of robotic data.
Verifier-guided test-time scaling offers an efficient alternative by sampling multiple action candidates and selecting the one most likely to lead to task success at inference time.
Existing classification-based verifiers learn from trajectory-level outcomes but treat all visited states equally, even though their value for candidate discrimination can vary across a trajectory.
At many states, plausible actions are similar and provide limited discrimination signal, while only a sparse set of decision-critical states admits meaningfully different actions that can substantially affect downstream outcomes.
To address this, we propose \textsc{DiVeR}, which estimates decision criticality from the dispersion of sampled action representations.
\textsc{DiVeR} then uses this signal to reweight verifier learning toward states where action selection is most consequential, without requiring step-level annotations or additional environment interaction.
Across LIBERO, RoboCasa, and real-world experiments on a Franka Research 3 robot, \textsc{DiVeR} consistently improves task success through more effective verifier-guided action selection, while adding negligible verifier inference overhead.
Videos are available at:
\href{https://www.microsoft.com/en-us/research/publication/diver-decision-critical-verifier-learning-for-vla-test-time-scaling/}{\faRobot\ Project Page}.
\end{abstract}

\section{Introduction}
\label{sec:intro}

Vision-Language-Action (VLA) models~\citep{kim2024openvla,black2024pi_0,intelligence2025pi_,intelligence2025pi,bjorck2025gr00t} learn generalist robot policies from large-scale multimodal demonstrations, but further scaling remains limited by the high cost of robotic data collection~\citep{ma2026survey}.
Test-Time Scaling (TTS)\footnote{We use test-time scaling to denote parallel scaling  via repeated sampling and ranking multiple action candidates, rather than sequential scaling via extended rollouts or iterative refinement~\citep{madaan2023self}.}~\citep{snell2024scaling,jang2025verifier,kwok2025robomonkey} offers a complementary route to improving VLA performance without collecting new demonstrations or retraining the base policy. Rather than executing a single sampled action, a pretrained VLA can be viewed as a stochastic generator that proposes multiple candidate action chunks at each decision step. Since this distribution may contain both successful and unsuccessful continuations, a verifier can rank the candidates and select the one most likely to make task progress. This naturally leads to verifier-guided Best-of-\textit{N} (BoN) sampling, a generator--verifier framework that converts additional inference-time computation into better action selection.

\begin{figure}[t!]
\centering
\includegraphics[width=\columnwidth]{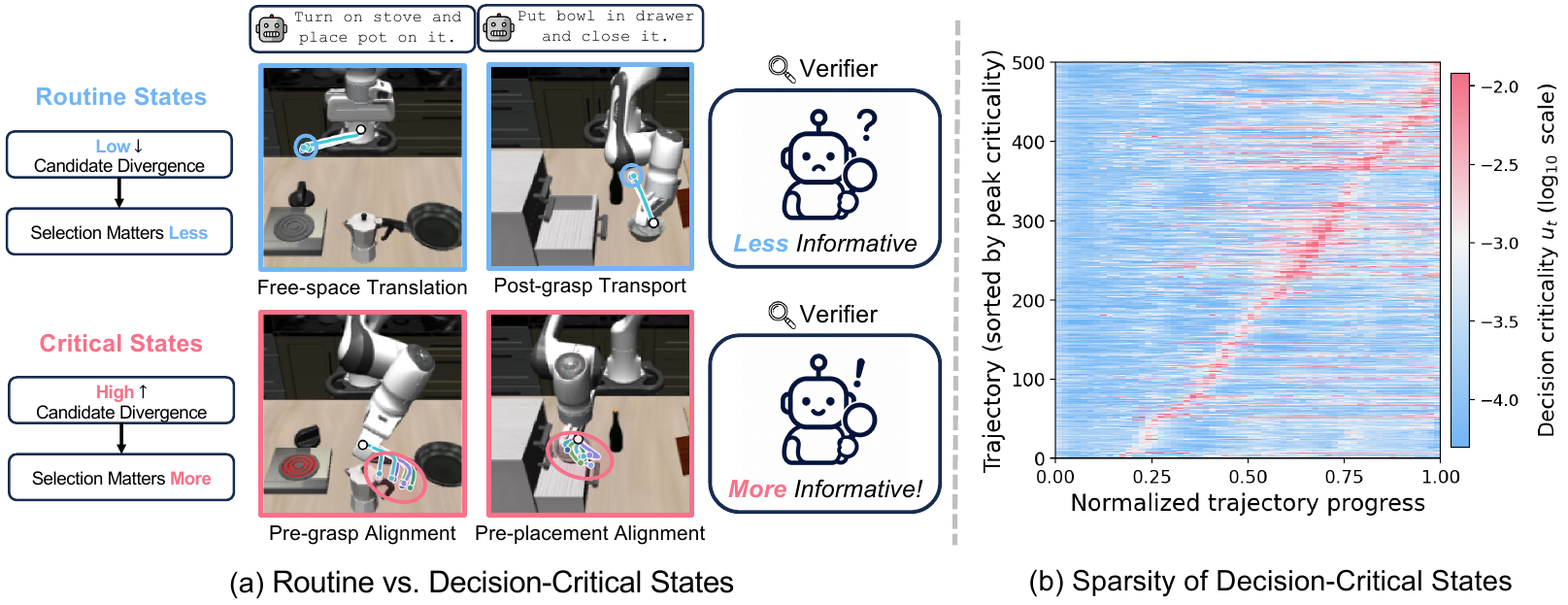}
\caption{
\textbf{Not all states matter equally for verifier learning.}
(a) At \textbf{\textcolor{myblue}{routine states}}, sampled action candidates remain similar, so distinguishing among them is less important for downstream behavior.
At \textbf{\textcolor{myred}{decision-critical states}}, candidates diverge more strongly, making accurate selection more consequential for task outcomes.
(b) Across 500 LIBERO-Long trajectories from a frozen $\pi_0$ policy,
states with \textbf{\textcolor{myred}{high decision criticality}}, measured by $u_t$~(\Cref{eq:action_uncertainty}),
are sparse and temporally localized.
These observations motivate focusing verifier learning on sparse decision-critical states where action selection matters most.
}
\label{fig:1}
\vspace{-5pt}
\end{figure}

A practical approach to verifier learning is to train lightweight classification-based verifiers~\citep{yang2025steering,tsao2026learning,attarian2026update} from trajectory-level success and failure outcomes, avoiding costly step-level supervision~\citep{zhang2026progress}  and the inference overhead of external VLM-based judges~\citep{liang2026robometer,kwok2025robomonkey,kwok2026llm}.
However, existing approaches propagate each trajectory outcome uniformly across all visited states or state–action pairs, implicitly treating them as equally informative for candidate discrimination.
This assumption can be problematic in long horizon robotic execution, where the informativeness of trajectory-level supervision for candidate discrimination can vary substantially across states.

Our key observation is that \textit{not all states are equally consequential for action selection}.
As illustrated in~\Cref{fig:1}(a), at states such as free-space motion or post-grasp transport, sampled action candidates remain highly similar. We refer to these as \textbf{\textcolor{myblue}{routine states}}, where candidate selection is relatively inconsequential and provides limited signal for learning to discriminate among actions. 
In contrast, at \textbf{\textcolor{myred}{decision-critical states}}, such as pre-grasp or pre-placement alignment, plausible candidates diverge more strongly, and small differences in action choice can substantially affect downstream outcomes. 
\Cref{fig:1}(b) further shows that such high-criticality states are sparse and temporally localized rather than uniformly distributed throughout a trajectory.
This pattern echoes findings in language generation, where only a sparse subset of high-entropy minority tokens can lead to divergent continuations and steer generation toward distinct downstream trajectories~\citep{wang2026beyond}.
Together, these observations suggest that uniformly weighting all visited states can dilute the supervision signal from the relatively few states where accurate candidate discrimination matters most.

In this paper, we introduce \textsc{DiVeR}, a
\textbf{D}ecision-cr\textbf{i}ticality-weighted \textbf{VeR}ifier for VLA test-time scaling that focuses learning on states where action selection matters most. 
Our key idea is to estimate decision criticality from disagreement among plausible actions under the frozen VLA policy. At each visited state, we sample multiple action candidates and measure the dispersion of their action-expert representations. Low dispersion characterizes routine states where the policy proposes similar behaviors, whereas high dispersion identifies states where plausible actions diverge and candidate selection can substantially affect downstream outcomes.
This provides a practical proxy for decision criticality without requiring step-level supervision or additional environment interaction.
\textsc{DiVeR} uses this signal to reweight the verifier objective, concentrating trajectory-level supervision on states where accurate candidate discrimination is most consequential.

We evaluate \textsc{DiVeR} on two simulated manipulation benchmarks and a real-world robotic platform using $\pi_0$ and $\pi_{0.5}$.
Across policies and environments, \textsc{DiVeR} improves task success by up to \textbf{+13.6\%} over single sample inference and \textbf{+3.4\%} over the strongest baseline.
Notably, \textsc{DiVeR} also surpasses the VLM-based verifier by \textbf{+4.6\%} success rate while reducing verifier inference latency by over $\mathbf{700\times}$.
These results demonstrate the effectiveness of focusing verifier learning on the sparse states where candidate differences matter most.

Our key contributions are summarized as follows:
\begin{itemize}
\item We identify a limitation of uniform verifier training, showing that routine states dominate trajectories while providing limited signal for candidate discrimination, whereas sparse decision-critical states are more consequential for action selection.
\item We propose \textsc{DiVeR}, a novel decision-criticality-weighted verifier that uses action-expert representation variance as a practical proxy for decision criticality and places greater emphasis on states with stronger candidate disagreement.
\item We validate \textsc{DiVeR} across simulated and real-world robotic manipulation tasks, demonstrating consistent improvements in task success through more effective verifier-guided Best-of-\(N\) action selection, with negligible verifier overhead at inference.
\end{itemize}

\section{Related Works}
\label{sec:relaetd}

\textbf{Vision-Language-Action (VLA) models} learn generalist robot policies by fine-tuning pretrained vision-language backbones on large-scale robot demonstration data~\citep{ma2026survey, kim2024openvla, bjorck2025gr00t, black2024pi_0, barreiros2025careful}.
Recent approaches
further improve these policies along several complementary directions,
including geometry-aware representations that enhance 3D spatial understanding~\citep{qu2025spatialvla, li2026pointvla}, memory mechanisms for long-horizon tasks~\citep{shi2025memoryvla, torne2026mem, sridhar2025memer}, and world models that capture
environment dynamics~\citep{ye2026world, cen2025rynnvla, gao2026dreamdojo}. While effective, these directions
strengthen the policy itself and thus require additional data collection,
architectural modification, or retraining. In contrast, we treat the
pretrained VLA as a frozen stochastic action generator and improve its
execution purely at inference time, making our approach complementary to
these advances.

\textbf{Test-time Scaling in LLMs}
improves performance by allocating additional computation at inference time rather than through further training, e.g., via repeated sampling with majority voting~\citep{wang2022self,brown2024large,kang2026scalable} or verifier-guided Best-of-\textit{N} selection with outcome- and process-level reward models~\citep{snell2024scaling,zhang2025lessons, oh2026neglected, kang2026mid}.
Recent studies further suggest that not all generation steps are equally consequential: a small subset of high-entropy \textit{forking tokens} can act as branching points that steer reasoning toward different continuations, while many low-entropy tokens largely follow an already determined trajectory~\citep{bigelow2025forking,wang2026beyond,al2026fork}.
Focusing supervision on these forking tokens can therefore yield disproportionate gains, suggesting that the value of supervision may vary across a trajectory.
We study an analogous non-uniformity in robotic control through \textit{decision-critical states}, where candidate action selection can have substantially different downstream consequences.
Accordingly, we focus verifier learning on these sparse states rather than treating all states uniformly.

\textbf{Test-time Scaling in VLAs.}
Motivated by advances in LLM test-time scaling, recent works have adopted
generate-then-verify paradigms for robotic control.
For example, (1) VLM-based verifiers score candidate actions
with large vision-language models~\citep{kwok2025robomonkey, kwok2026scaling, dai2025rover, liang2026robometer, duan2024aha}, but
their inference cost and deployment overhead hinder real-time control;
(2) Verifier-free methods instead rely on the policy's internal signals, such
as distributional confidence or candidate density~\citep{jang2025verifier, rosasco2025kdpe, choi2026scale, park2026acg},
yet these signals are not grounded in task outcomes; 
(3) Classification-based verifiers~\citep{yang2025steering,attarian2026update,tsao2026learning} offer a lightweight alternative by learning from trajectory-level outcomes, but weight all visited states uniformly.
Our approach instead accounts for the non-uniform importance of states by
estimating decision criticality from action-expert representation variance and
placing greater training emphasis on states where candidate actions diverge
and selection is most consequential.

\section{Problem Setup}
\label{sec:prelim}

\textbf{Verifier-guided Best-of-\(N\) action selection.}
We consider a generator--verifier framework for improving a generalist VLA policy through test-time compute.
We model robotic execution over decision steps \(t\in[T]\), with policy-conditioning state space \(\mathcal{S}\) and action space \(\mathcal{A}\), where \(T\) is the trajectory horizon.
The state \(s_t\in\mathcal{S}\) includes the current visual observation, language instruction, and any additional context used by the policy.
Conditioned on \(s_t\), the pretrained VLA policy \(\policy(\cdot\mid s_t)\) generates an action chunk
\(\mathbf{a}_t=(a_{t,1},\ldots,a_{t,H})\in\mathcal{A}^{H}\),
where \(H\) is the action-chunk horizon.
Rather than executing a single prediction, we treat the policy as a stochastic generator and sample \(N\) candidate action chunks,
\begin{equation}
\mathbf{a}^{(i)}_t
\sim
\policy(\cdot\mid s_t),
\qquad i\in[N].
\end{equation}
A learned verifier
\(\fhat:\mathcal{S}\times\mathcal{A}^{H}\rightarrow\mathbb{R}\)
assigns a scalar score to each candidate, and the robot executes
\begin{equation}
\mathbf{a}^{\star}_t
=
\arg\max_{i\in[N]}
\fhat(s_t,\mathbf{a}^{(i)}_t).
\label{eq:bon}
\end{equation}
Thus, the VLA policy serves as a stochastic proposer of plausible robot behaviors, while the verifier provides an inference-time selection rule for choosing the candidate most likely to satisfy the instruction.

\section{Method}
\label{sec:method}

\begin{figure*}[t!]
\centering
\includegraphics[width=\textwidth]{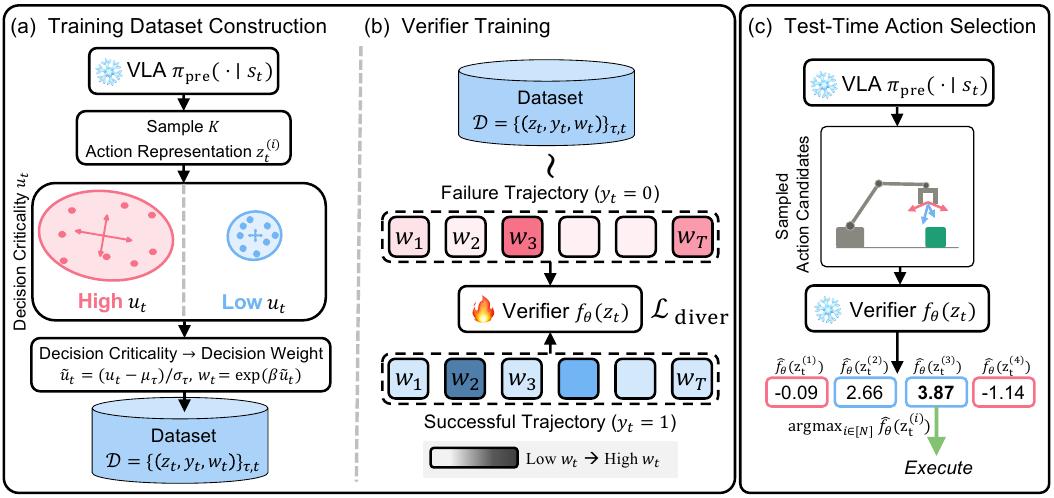}
\vspace{-15pt}
\caption{
\textbf{Overview of \textsc{DiVeR}.}
(a)~At each visited state, we sample $K$ action candidates from the frozen VLA policy and estimate decision criticality $u_t$ from the dispersion of their action-expert representations.
The scores are standardized within each trajectory and converted into decision weights $w_t$, which are paired with the executed action representation $z_t$ and trajectory outcome $y_t$ to construct the training dataset $\mathcal{D}$.
(b)~The verifier is trained with decision-weighted binary cross-entropy, assigning greater importance to states with larger $w_t$.
(c)~At inference, the verifier scores $N$ sampled action candidates at each decision step and executes the highest-scoring action chunk.
}

\label{fig:pipeline}
\end{figure*}

\textbf{Overview.}
\textsc{DiVeR} learns a lightweight verifier from successful and failed trajectories while emphasizing states where candidate action selection is most consequential.
We estimate decision criticality from the dispersion of action-expert representations sampled from the frozen VLA policy and use this signal to weight each state's contribution to verifier learning.
The overall framework is illustrated in~\Cref{fig:pipeline}.

\subsection{Success-Visitation Verifier}
\label{sec:success_visitation}

We formulate the verifier as a lightweight discriminator using trajectory-level success and failure outcomes.
Each trajectory provides a sequence of state--action pairs together with a terminal outcome label.
Let $\rho^{+}(s,\mathbf{a})$ and $\rho^{-}(s,\mathbf{a})$ denote the state--action visitation densities induced by successful and failed trajectories, respectively.
The discriminator $f_\theta(s,\mathbf{a})\in(0,1)$ distinguishes samples from these two visitation distributions and is optimized using binary cross-entropy:

\begin{equation}
\mathcal{L}_{\mathrm{BCE}}(\theta)
=
-\mathbb{E}_{(s,\mathbf{a})\sim\rho^{+}}
    \left[\log f_\theta(s,\mathbf{a})\right]
-\mathbb{E}_{(s,\mathbf{a})\sim\rho^{-}}
    \left[\log(1-f_\theta(s,\mathbf{a}))\right].
\end{equation}

Under balanced sampling between successful and failed trajectories, the population-optimal discriminator is
\begin{equation}
f^{\star}(s,\mathbf{a})
=
\frac{\rho^{+}(s,\mathbf{a})}
{\rho^{+}(s,\mathbf{a})+\rho^{-}(s,\mathbf{a})}.
\end{equation}
Its logit therefore satisfies
\begin{equation}
\log
\frac{f^{\star}(s,\mathbf{a})}
{1-f^{\star}(s,\mathbf{a})}
=
\log
\frac{\rho^{+}(s,\mathbf{a})}
{\rho^{-}(s,\mathbf{a})}.
\end{equation}

Thus, at the population optimum, the discriminator logit recovers the log ratio between successful and failed state--action visitation densities.
We define $
R(s,\mathbf{a})
=
\frac{\rho^{+}(s,\mathbf{a})}
{\rho^{-}(s,\mathbf{a})}
$
as the success-visitation ratio.
A larger $R(s,\mathbf{a})$ indicates that a state--action pair is more characteristic of successful than failed trajectories, providing a natural signal for candidate ranking.
Accordingly, the discriminator logit serves as the verifier score:
\begin{equation}
\hat{f}_{\theta}(s,\mathbf{a})
:=
\log
\frac{f_{\theta}(s,\mathbf{a})}
{1-f_{\theta}(s,\mathbf{a})}.
\end{equation}

In practice, rather than operating directly on raw state--action pairs, we represent each candidate using an internal hidden representation $z_t\in\mathbb{R}^{d}$ from the VLA policy's action expert~\citep{gu2025safe,yang2025steering}.
We parameterize the discriminator as $f_\theta(z_t)$ and use its logit as the verifier score.
In this representation space, the same formulation estimates the log density ratio between representations induced by successful and failed trajectories.

\textbf{Do all states matter equally?}
Although this formulation provides a lightweight outcome-grounded verifier, it treats all visited states equally during training.
However, the usefulness of each state for candidate discrimination can vary 
along a trajectory.
At \textcolor{myblue}{\textbf{routine states}}, successful and failed trajectories often visit similar states and execute similar actions, providing limited discrimination signal.
In contrast, at \textcolor{myred}{\textbf{decision-critical states}}, plausible actions can diverge and lead to substantially different downstream outcomes.
Because routine states dominate trajectories (\Cref{fig:1}(b)), uniform weighting can dilute the signal from the sparse states where accurate candidate selection matters most.
This motivates estimating decision criticality and placing greater emphasis on these consequential states during verifier learning.

\subsection{Decision Criticality from Action-Representation Variance}
\label{sec:decision_criticality}

\begin{wrapfigure}{r}{0.45\columnwidth}
    \centering
    \vspace{-12pt}
\includegraphics[width=\linewidth]{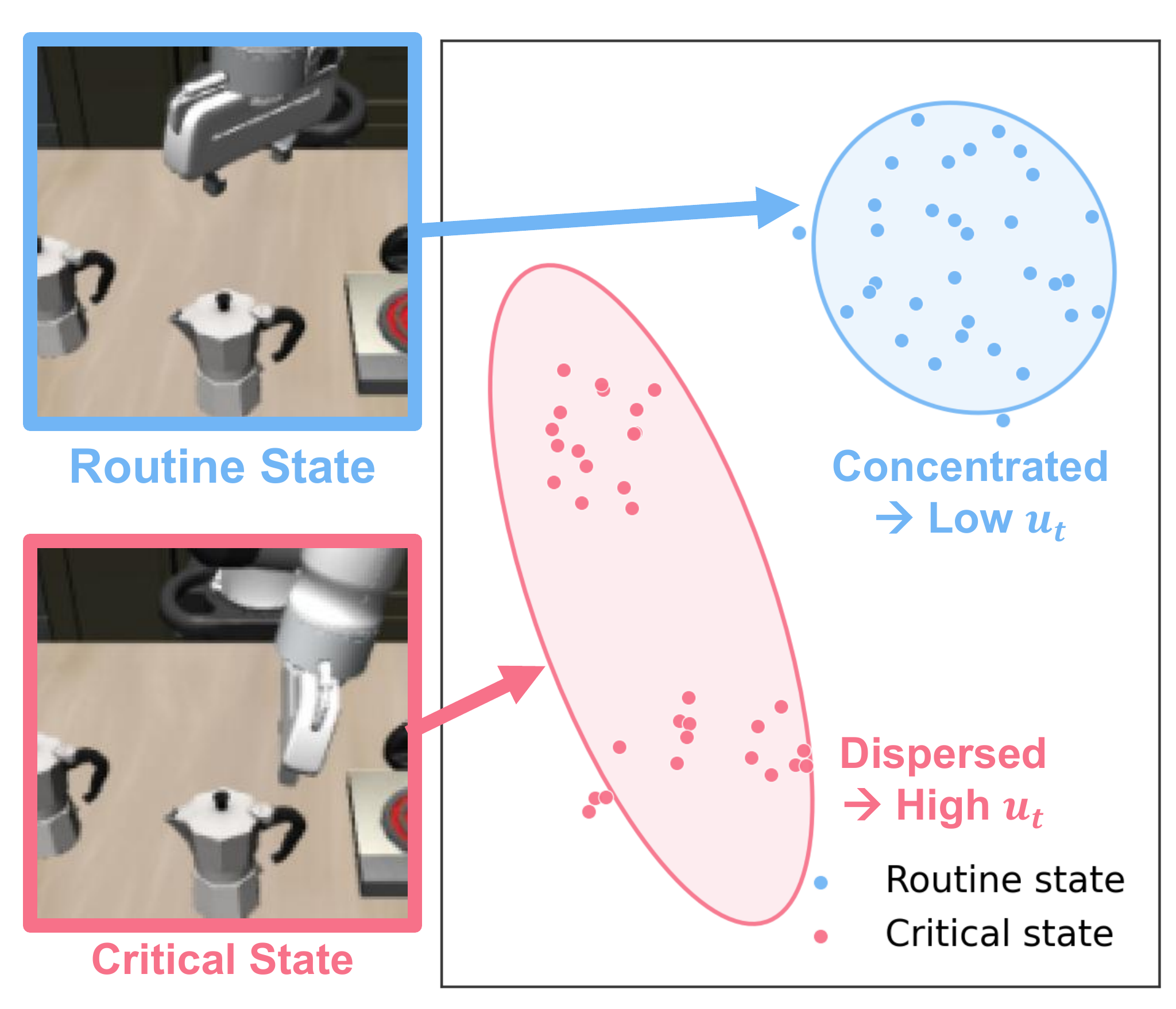}
    \vspace{-20pt}
    \caption{
\small
\textbf{Sampled action representations.}
Candidate representations are concentrated at a routine state
(e.g., free-space motion) but more dispersed at a
decision-critical state (e.g., before grasping).
}
\label{fig:repr_variance}
    \vspace{-5pt}
\end{wrapfigure}

We seek a practical measure of decision criticality that captures how strongly
the policy's plausible actions diverge at a given state, without requiring
step-level supervision or additional environment interaction.
As illustrated by the examples in~\Cref{fig:repr_variance},
candidate action representations can remain tightly concentrated at routine
states while becoming more dispersed at decision-critical states.
We therefore use candidate disagreement in the frozen VLA policy's internal
action-representation space as a proxy for decision criticality.
For each state $s_t$, we sample $K$ candidate action chunks from the frozen policy,
\begin{equation}
\widetilde{\mathbf{a}}_t^{(k)}
\sim
\policy(\cdot\mid s_t),
\qquad k\in[K],
\end{equation}
and extract the corresponding internal hidden representation from the VLA action expert\footnote{Details of the representation extraction are provided
in Appendix~\ref{sec:experimental_details}.}
$z_t^{(k)}\in\mathbb{R}^{d}$, where
$z_{t,j}^{(k)}$ denotes its $j$-th coordinate.
We quantify candidate disagreement by the mean per-coordinate variance,
\begin{equation}
u_t
=
\frac{1}{d}
\sum_{j=1}^{d}
\operatorname{Var}_{k\in[K]}
\left[z_{t,j}^{(k)}\right]
=
\frac{1}{d}
\operatorname{Tr}
\left(
\operatorname{Cov}_{k\in[K]}
\left[z_t^{(k)}\right]
\right).
\label{eq:action_uncertainty}
\end{equation}
We use $u_t$ as the decision-criticality score.
Lower values of $u_t$ indicate more concentrated candidate actions, characterizing routine states where there is limited opportunity for meaningful candidate discrimination.
Conversely, higher values reflect stronger disagreement among plausible actions, serving as a proxy for decision-critical states where accurate candidate selection may be more consequential. We further verify in Appendix~\ref{sec:criticality_validation} that $u_t$ correlates with rollout-based oracle criticality.

\subsection{\textsc{DiVeR}: Decision-Weighted Verifier Learning}
\label{sec:weighted_verifier}

\textbf{Decision-weighted training dataset.}
Let \(\mathcal{T}\) denote the set of offline trajectories collected by the frozen VLA policy.
Each trajectory \(\tau\in\mathcal{T}\) consists of a sequence of visited states and executed actions with a terminal success or failure label.
For each timestep \(t\) in trajectory \(\tau\), we store the action-expert representation \(z_t\) of the executed action chunk and compute its decision-criticality score \(u_t\) from \(K\) sampled candidates according to~\Cref{eq:action_uncertainty}.
To obtain a relative measure of criticality within each trajectory, we standardize the scores and convert them into positive decision weights using exponentiated weighting~\citep{peng2019advantage}:
\begin{equation}
\tilde{u}_t
=
\frac{u_t-\mu_{\tau}}{\sigma_{\tau}},
\quad
w_t
=
\exp(\beta\tilde{u}_t),
\quad
\beta \ge 0,
\label{eq:decision_weight}
\end{equation}
where \(\mu_{\tau}\) and \(\sigma_{\tau}\) denote the mean and standard deviation of the criticality scores in trajectory \(\tau\), and \(\beta\) is the weight temperature.
This transformation assigns larger weights to states with above-average candidate disagreement while downweighting more routine states.
The resulting verifier-training dataset is
\begin{equation}
\mathcal{D}
=
\left\{
(z_t,y_t,w_t)
\right\}_{\tau\in\mathcal{T},\, t\in[T_\tau]},
\end{equation}
where \(T_\tau\) denotes the length of trajectory \(\tau\), and \(y_t\in\{0,1\}\) denotes the trajectory-level outcome propagated to timestep \(t\), with \(y_t=1\) for success and \(y_t=0\) for failure.

\textbf{Decision-weighted verifier objective.}
We train the verifier using weighted binary cross-entropy:
\begin{equation}
\mathcal{L}_{\mathrm{diver}}(\theta)
=
-
\mathbb{E}_{(z_t,y_t,w_t)\sim\mathcal{D}}
\left[
w_t
\left(
y_t \log f_\theta(z_t)
+
(1-y_t)\log\left(1-f_\theta(z_t)\right)
\right)
\right].
\label{eq:weighted_bce}
\end{equation}

The decision weight $w_t$ is computed solely from candidate disagreement under
the frozen policy and does not directly use the trajectory outcome $y_t$.
The weighting changes the relative contribution of training samples,
emphasizing states with greater candidate disagreement where candidate
discrimination is potentially more useful.
The overall algorithm is summarized in~\Cref{alg:1}.

\section{Experiments}
\label{sec:experiments}

\subsection{Experimental Setup}
\label{sec:setup}

We evaluate our method in two simulated environments and on a real-world robotic platform.
Across all environments, we adopt a \textit{multi-task setting}, training and evaluating \textit{a single verifier shared across all tasks} within each benchmark.
 Results are averaged over three random seeds. 
Implementation details are provided in Appendix~\ref{sec:experimental_details}.

\textbf{Simulation environments.}
We evaluate on \textbf{LIBERO-Long}~\citep{liu2023libero}, which comprises 10 long-horizon manipulation tasks with 50 episodes per task, and \textbf{RoboCasa}~\citep{nasiriany2026robocasa}, which contains 18 atomic manipulation tasks with 50 episodes per task.
For both environments, we use 30 episodes per task for training and the remaining 20 episodes for evaluation.

\textbf{Real-world environment.}
We conduct real-world experiments on a \textbf{Franka Research 3} across four manipulation tasks.
We fine-tune $\pi_{0.5}$ with LoRA~\citep{hu2022lora} using 40 human teleoperation demonstrations per task and evaluate on 24 episodes per task.
The tasks are: (1) placing a green block in a bowl, (2) placing a cup on a plate, (3) stacking the red block on the blue block, and (4) closing a drawer.

\textbf{Models.} 
We use the representative flow-based VLA models $\pi_0$~\citep{black2024pi_0} and $\pi_{0.5}$~\citep{intelligence2025pi_} as our primary evaluation policies.
Note that \textsc{DiVeR} only requires the policy to generate multiple action candidates and expose an internal action representation, making it applicable beyond flow-matching architectures.
We evaluate the autoregressive OpenVLA~\citep{kim2024openvla} in Appendix~\ref{sec:autoregressive}.
For the flow-based models, we generate candidates by independently sampling the initial Gaussian noise while keeping the policy conditioning fixed, and reuse VLM computation through KV caching.
We report action-sampling latency in Appendix~\ref{sec:infer_action}.

\textbf{Baselines.}
We compare our method with two verifier-free selection methods and two learned verifiers under the same Best-of-$N$ setting, using identical candidate sets sampled from the same frozen policy.
\textbf{KDPE}~\citep{rosasco2025kdpe} selects the candidate with the highest kernel-density estimate, while \textbf{MG-Select}~\citep{jang2025verifier} ranks candidates using condition-masking confidence.
\textbf{TACO}~\citep{yang2025steering} learns a pseudo-count-based score that favors in-distribution state--action pairs, whereas \textbf{SVM}~\citep{tsao2026learning} discriminates successful from failed visitation using trajectory-level outcomes.
Following the original implementation, we concatenate a CNN-based representation of the current image observation with the corresponding action chunk and train an MLP discriminator on the resulting feature.
Details of the baselines are provided in Appendix~\ref{sec:baselines}.

\subsection{Main experiments}
\label{sec:main}

\paragraph{Simulation environment.}
\label{sec:sim}
In~\Cref{tab:libero}, we compare our method with verifier-free and verifier-based baselines on LIBERO-Long while varying the number of action candidates sampled at test time.
Our method achieves robust performance across both models and candidate budgets.
For $\pi_0$ with $N=32$, our method improves the success rate by +5.8\% over the $N=1$ baseline and by +2.8\% over the best-performing baseline.
In~\Cref{tab:robocasa}, we report category-wise performance on RoboCasa atomic tasks with $N=16$.
Our method improves performance across diverse task categories, rather than being biased toward a particular type of task.
On average, it improves success rate by \textbf{+5.9\%} for $\pi_0$ and \textbf{+11.6\%} for $\pi_{0.5}$ over the corresponding $N=1$ baselines.
These results demonstrate the effectiveness of our method across different policies, candidate budgets, and task settings.

\paragraph{Real-robot platform.}
\label{sec:real}
As shown in~\Cref{tab:real_robot}, we evaluate on a Franka Research 3 robot with $N=16$ and compare \textsc{DiVeR} against the $N=1$ $\pi_{0.5}$ policy (Base) and a uniformly trained verifier (Uniform).
Across all four tasks, \textsc{DiVeR} consistently improves performance, achieving an average success rate of \(\mathbf{71.9\%}\), compared with \(65.6\%\) for Uniform and \(58.3\%\) for Base.
This corresponds to gains of \(\mathbf{+6.3}\) and \(\mathbf{+13.6}\) percentage points, respectively, demonstrating that decision-weighted verifier learning transfers effectively to real-world robotic manipulation.

\begin{table*}[t!]
\centering
\scriptsize
\setlength{\tabcolsep}{5.0pt}
\renewcommand{\arraystretch}{1.08}
\caption{
Average success rates (\%) on LIBERO-Long with varying numbers of action candidates \(N\) sampled at test time.
\textbf{Bold} and \underline{underlined} values denote the best and second-best results, respectively.
}
\label{tab:candidate_scaling}
\resizebox{\textwidth}{!}{
\begin{tabular}{lcccc|cccc}
\toprule
\multirow{2}{*}{\textbf{Method}}
& \multicolumn{4}{c|}{\textbf{Policy: }$\boldsymbol{\pi_0}$}
& \multicolumn{4}{c}{\textbf{Policy: }$\boldsymbol{\pi_{0.5}}$} \\
\cmidrule(lr){2-5}
\cmidrule(lr){6-9}
& $N=4$ & $N=8$ & $N=16$ & $N=32$
& $N=4$ & $N=8$ & $N=16$ & $N=32$ \\
\midrule

Baseline
& \multicolumn{4}{c|}{$83.5$}
& \multicolumn{4}{c}{$93.5$} \\

KDPE~\citep{rosasco2025kdpe}
& $83.5$ & $84.5$ & $82.8$ & $82.3$
& $94.0$ & $92.7$ & $94.5$ & $95.7$ \\
MG-Select~\citep{jang2025verifier}
& $82.0$ & $80.8$ & $83.3$ & $86.3$
& $94.2$ & $95.0$ & $\underline{96.5}$ & $96.8$ \\

TACO~\citep{yang2025steering}
& $\mathbf{84.2}$ & $83.0$ & $83.5$ & $85.0$
& $\underline{94.5}$ & $94.8$ & $95.5$ & $95.7$ \\

SVM~\citep{tsao2026learning}
& $82.8$ & $\underline{85.2}$ & $\underline{85.5}$ & $\underline{86.5}$
& $94.0$ & $\underline{95.5}$ & $94.0$ & $\underline{97.0}$ \\

\ourscell{\textbf{Ours}}
& 
\ourscell{$\underline{84.0}_{\scriptscriptstyle \pm 0.5}$}
& 
\ourscell{$\mathbf{86.3}_{\scriptscriptstyle \pm 1.3}$}
& 
\ourscell{$\mathbf{87.5}_{\scriptscriptstyle \pm 1.5}$}
& 
\ourscell{$\mathbf{89.3}_{\scriptscriptstyle \pm 0.3}$}
& 
\ourscell{$\mathbf{94.8}_{\scriptscriptstyle \pm 0.3}$}
& 
\ourscell{$\mathbf{96.0}_{\scriptscriptstyle \pm 0.5}$}
& 
\ourscell{$\mathbf{97.0}_{\scriptscriptstyle \pm 0.5}$}
&
\ourscell{$\mathbf{97.5}_{\scriptscriptstyle \pm 0.5}$} 
\\

\bottomrule

\end{tabular}}
\label{tab:libero}
\end{table*}

\begin{table*}[t!]
\centering
\scriptsize
\setlength{\tabcolsep}{3.6pt}
\renewcommand{\arraystretch}{1.08}
\caption{Category-wise average success rates (\%) on RoboCasa atomic tasks. \textbf{Bold} and \underline{underlined} values denote the best and second-best results, respectively.
}
\label{tab:robocasa}
\resizebox{\textwidth}{!}{
\begin{tabular}{lccccc|ccccc}
\toprule
\multirow{2}{*}{\textbf{Method}}
& \multicolumn{5}{c|}{\textbf{Policy: }$\boldsymbol{\pi_0}$}
& \multicolumn{5}{c}{\textbf{Policy: }$\boldsymbol{\pi_{0.5}}$} \\
\cmidrule(lr){2-6}
\cmidrule(lr){7-11}
& Articulated & Control & Object-Centric & Navigation & Average
& Articulated & Control & Object-Centric & Navigation & Average \\
\midrule

Baseline
& \underline{37.2}
& 16.2
& 42.5
& 0.0
& 32.2
& 50.0
& 18.8
& 58.3
& 0.0
& 43.1 \\

KDPE
& 32.8
& 26.2
& \textbf{49.2}
& 0.0
& \underline{35.0}
& 47.8
& 20.0
& 57.5
& 0.0
& 43.3 \\

MG-Select
& 29.3
& 22.5
& 36.7
& 0.0
& 28.6
& 53.6
& \underline{33.8}
& \textbf{66.7}
& \textbf{15.0}
& \underline{51.3} \\

TACO
& 37.1
& \underline{28.8}
& 35.8
& 0.0
& 32.7
& 52.1
& 26.2
& 53.3
& 0.0
& 43.9 \\

SVM
& \underline{37.2}
& 26.2
& 40.8
& 0.0
& 33.9
& \underline{55.0}
& 20.0
& 60.8
& \textbf{15.0}
& 46.9 \\

\ourscell{\textbf{Ours}}
& $\ourscell{\mathbf{38.6}}_{\scriptscriptstyle \pm 1.2}$
& $\ourscell{\mathbf{33.8}}_{\scriptscriptstyle \pm 1.0}$
& $\ourscell{\underline{46.7}}_{\scriptscriptstyle \pm 1.6}$
& $\ourscell{0.0}_{\scriptscriptstyle \pm 0.0}$
& $\ourscell{\mathbf{38.1}}_{\scriptscriptstyle \pm 1.2}$
& $\ourscell{\mathbf{59.3}}_{\scriptscriptstyle \pm 0.9}$
& $\ourscell{\mathbf{41.2}}_{\scriptscriptstyle \pm 1.7}$
& $\ourscell{\underline{65.0}}_{\scriptscriptstyle \pm 1.0}$
& $\ourscell{\mathbf{15.0}}_{\scriptscriptstyle \pm 0.0}$
& $\ourscell{\mathbf{54.7}}_{\scriptscriptstyle \pm 1.0}$ \\

\bottomrule
\end{tabular}
}
\vspace{-0.4cm}
\end{table*}

\begin{table*}[t]
\centering
\small
\caption{
Real-robot evaluation with $\pi_{0.5}$.
\textbf{Left}: robot setup.
\textbf{Center}: the four evaluation tasks.
\textbf{Right}: task success rates (\%) over 24 evaluation episodes per task. 
}
\label{tab:real_robot}
\setlength{\tabcolsep}{4pt}
\renewcommand{\arraystretch}{1.05}

\begin{minipage}[c]{0.282\textwidth}
\vspace{0pt}
\centering
\includegraphics[width=\linewidth]{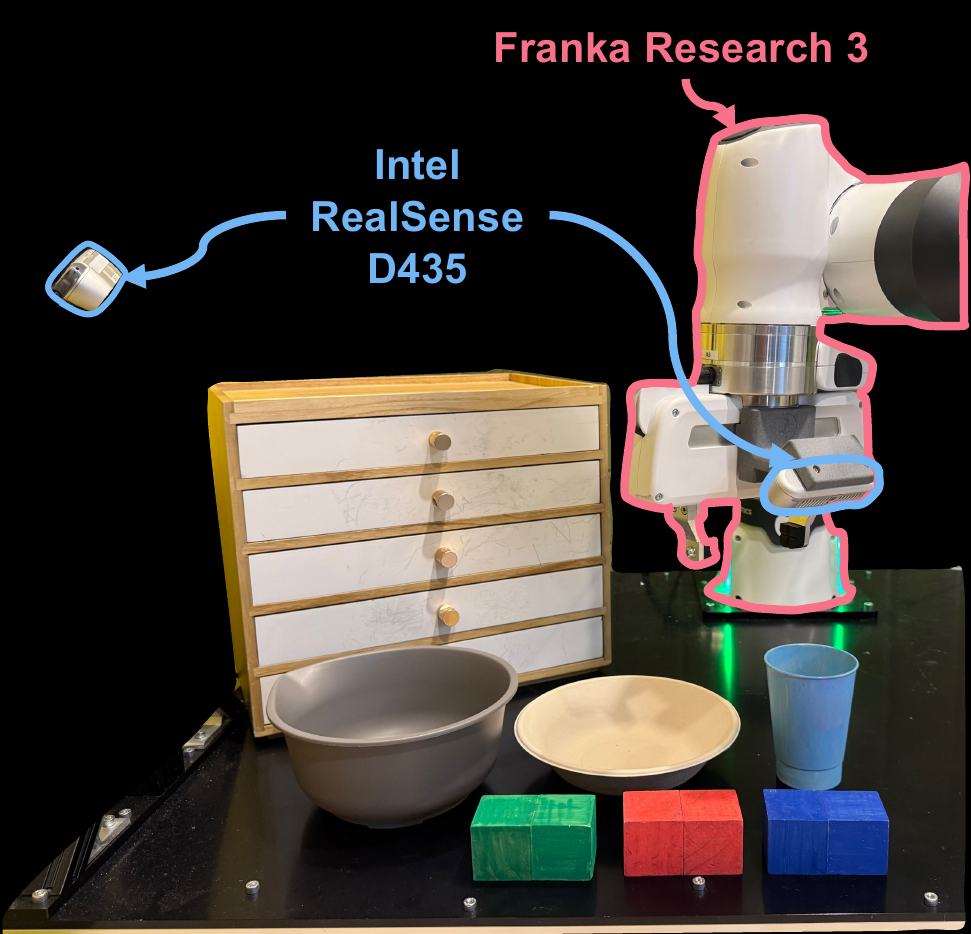}

\end{minipage}
\hfill
\begin{minipage}[c]{0.38\textwidth}
\vspace{0pt}
\centering

\begin{tabular}[t]{@{}p{0.49\linewidth}@{\hspace{2pt}}p{0.49\linewidth}@{}}
\taskimage{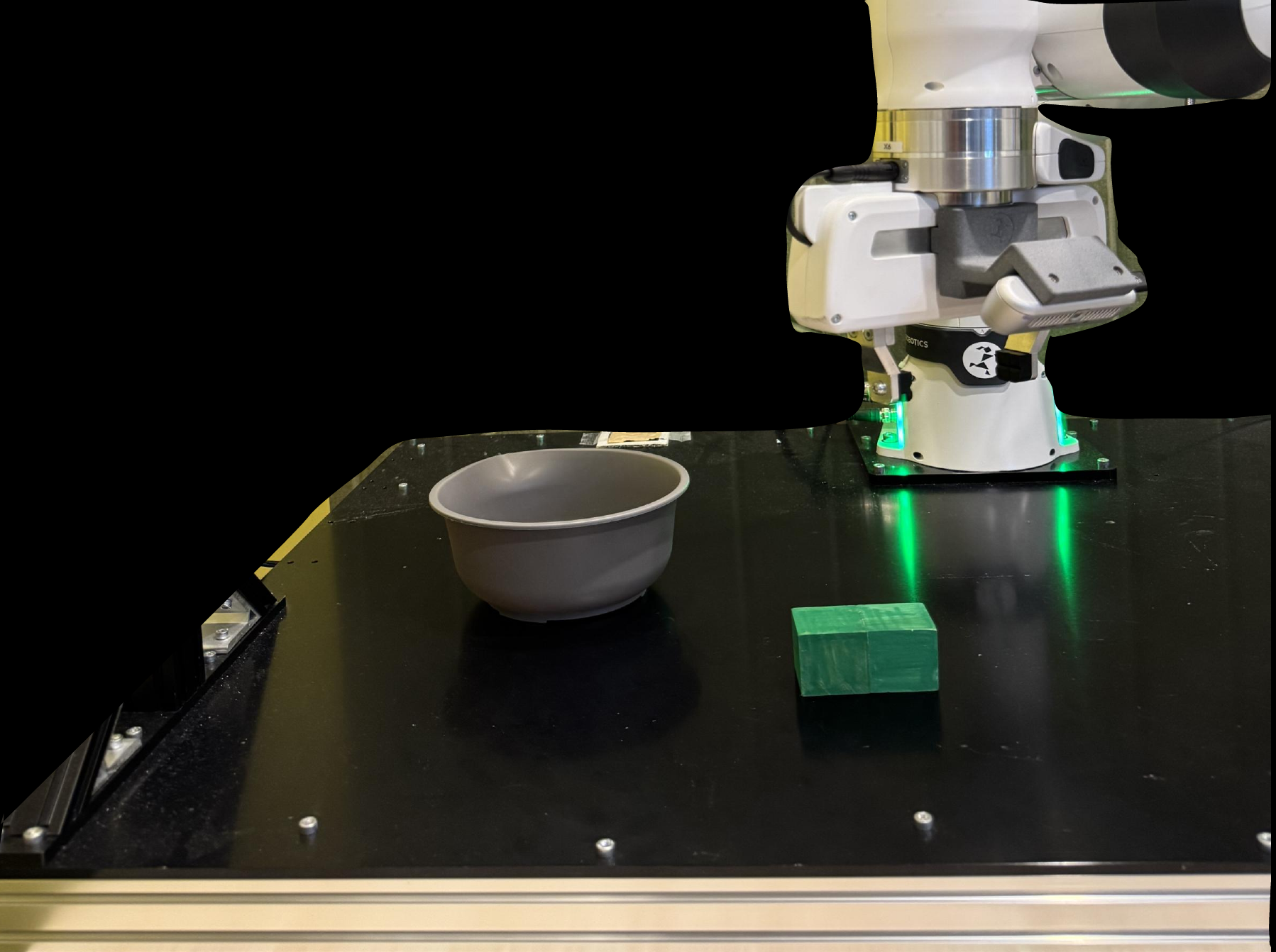}{PnP (Block)} &
\taskimage{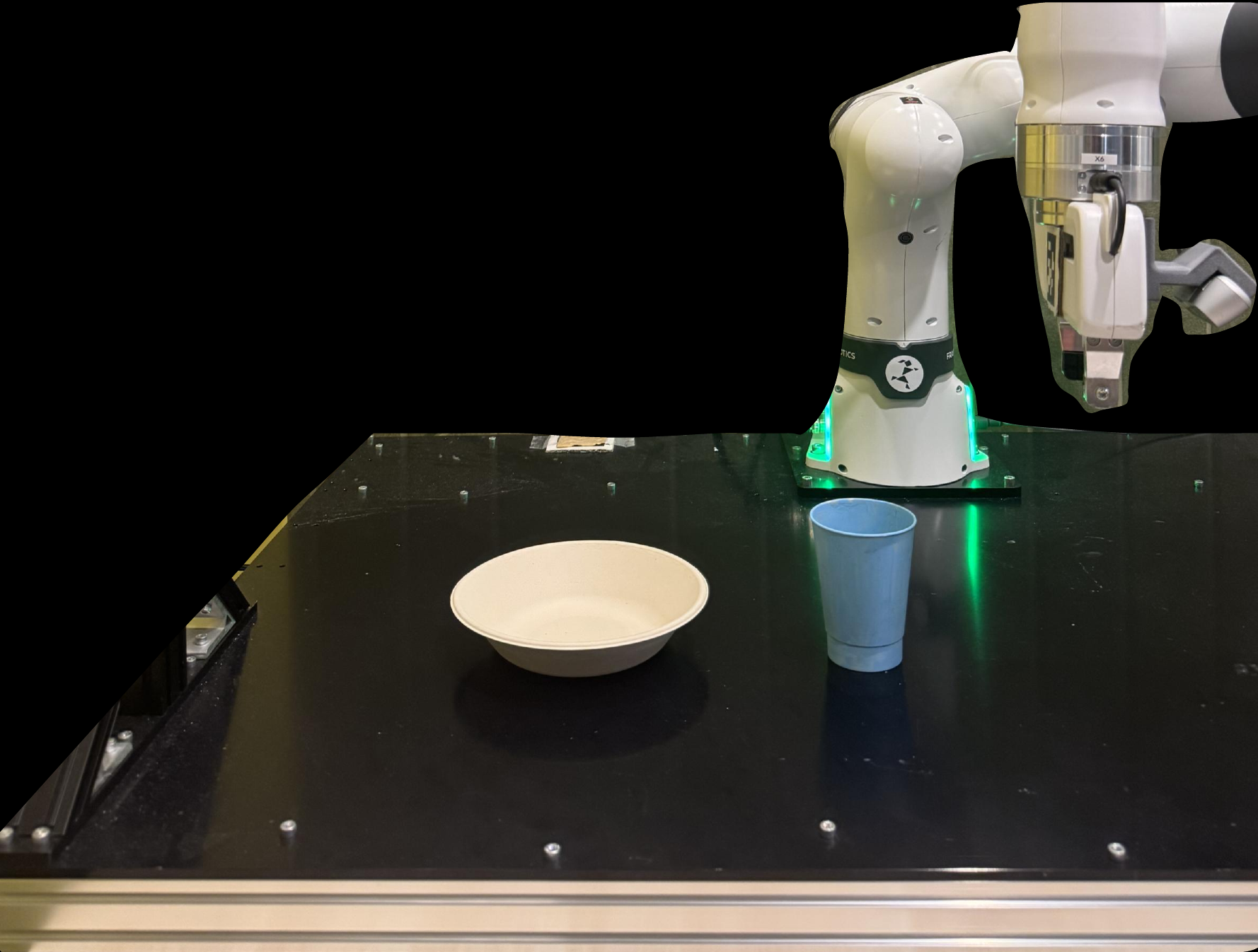}{PnP (Cup)}
\\[-2pt]
\taskimage{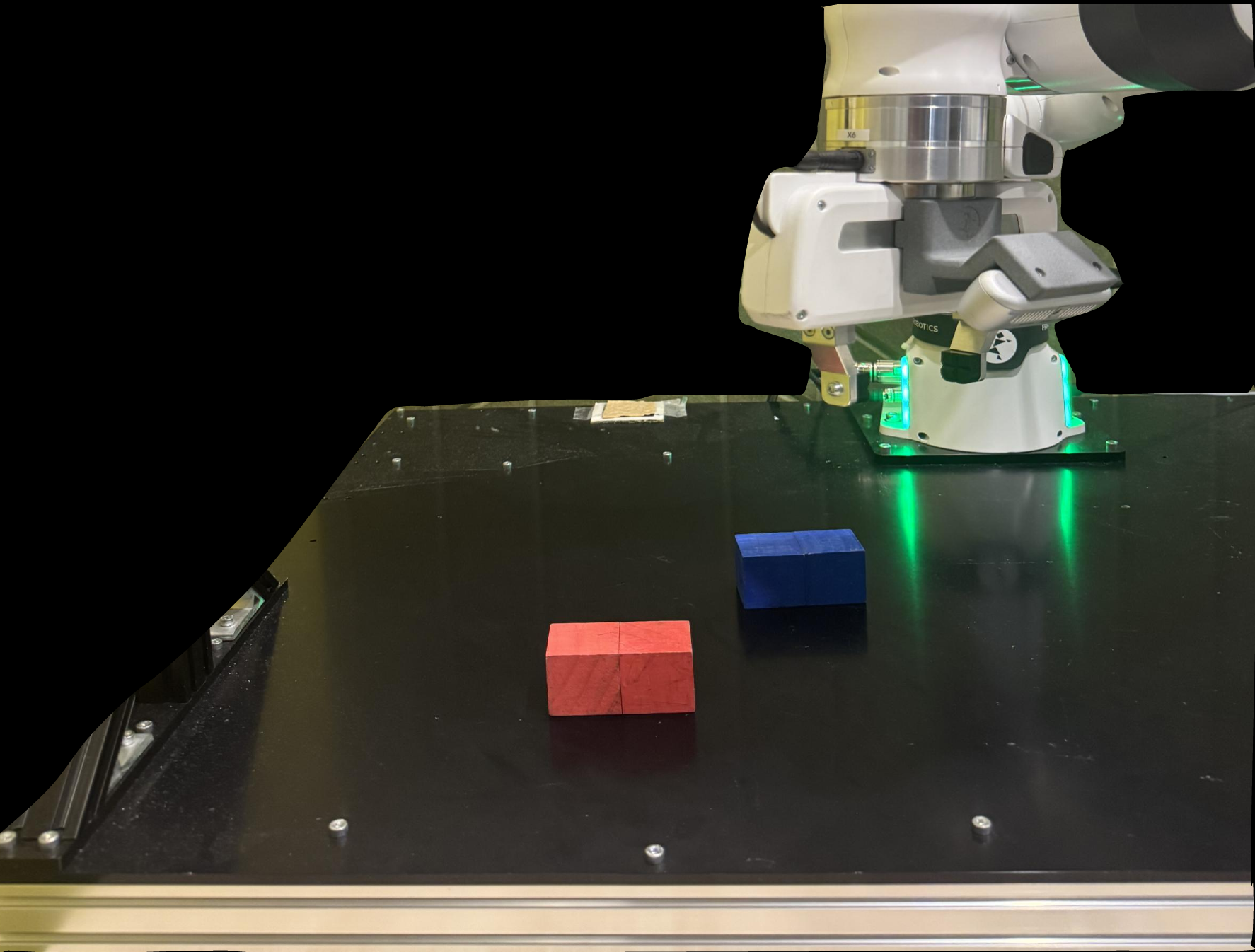}{Stack Cube} &
\taskimage{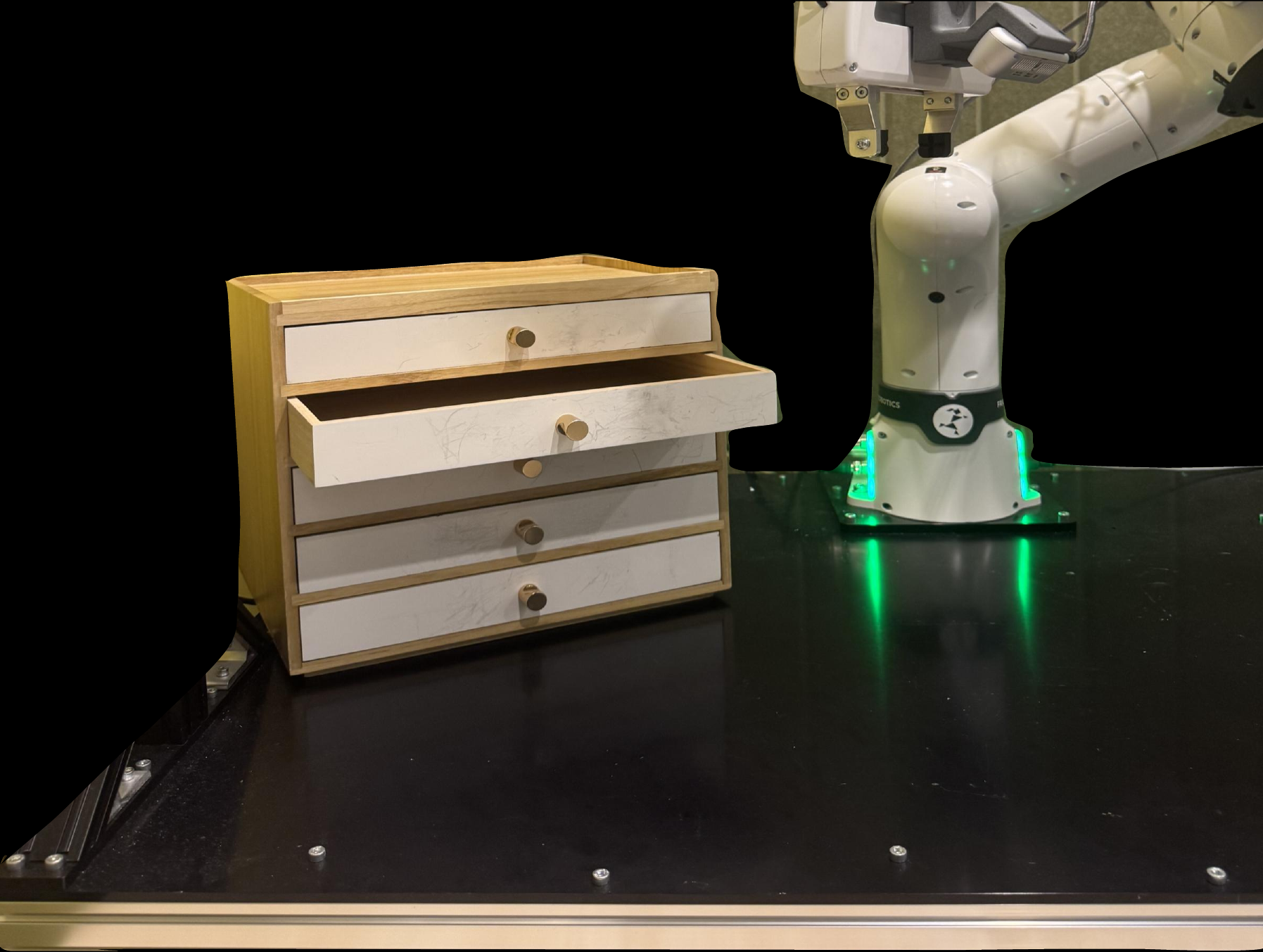}{Close Drawer}
\end{tabular}

\end{minipage}
\hfill
\begin{minipage}[c]{0.32\textwidth}
\vspace{0pt}
\centering

\resizebox{\linewidth}{!}{%
\begin{tabular}{@{}lccc}
\toprule
\textbf{Task} & \textbf{Base} & \textbf{Uniform} & \ourscell{\textbf{Ours}}\\
\midrule
PnP (block)   & 62.5 & 66.7 & \ourscell{\textbf{75.0}}\\
PnP (cup)     & 58.3 & 66.7 & \ourscell{\textbf{70.8}} \\
Stack block    & 45.8 & 58.3&  \ourscell{\textbf{66.7}}\\
Close drawer  & 66.7 &70.8 & \ourscell{\textbf{75.0}} \\
\midrule
\textbf{Average} & 58.3 & 65.6 &\ourscell{\textbf{71.9}} \\
\bottomrule
\end{tabular}
}
\end{minipage}
\vspace{-0.4cm}
\end{table*}

\begin{table*}[t]
\centering
\begin{minipage}[t]{0.32\textwidth}
    \centering
    \small
    \caption{Component analysis.}
    \label{tab:component}
    \resizebox{\linewidth}{!}{%
    \begin{tabular}{lcc}
        \toprule
        \textbf{Method} &
        $\boldsymbol{\pi_0}$ &
        $\boldsymbol{\pi_{0.5}}$ \\
        \midrule
        Baseline ($N=1$) & 32.2 & 43.1 \\
        Random selection & 33.0 & 44.1 \\
        \midrule
        Uniform training & 34.9 & 50.6 \\
        \ourscell{\textbf{Ours}} & \ourscell{38.1} & \ourscell{54.7} \\
        \bottomrule
    \end{tabular}}
\end{minipage}
\hfill
\begin{minipage}[t]{0.32\textwidth}
    \centering
    \small
    \caption{Verifier input ablation.}
    \label{tab:action_feature}
    \resizebox{\linewidth}{!}{%
    \begin{tabular}{lcc}
        \toprule
        \textbf{Input feature} &
        $\boldsymbol{\pi_0}$ &
        $\boldsymbol{\pi_{0.5}}$ \\
        \midrule
        Image \& Raw & 32.4 & 50.2 \\
        Image \& Repr. & 37.4 & 54.6  \\
        \ourscell{\textbf{Representation}} & 
        \ourscell{38.1} & \ourscell{54.7} \\
        \bottomrule
    \end{tabular}}
\end{minipage}
\hfill
\begin{minipage}[t]{0.32\textwidth}
    \centering
    \small
    \caption{Comparison with a VLM-based verifier.}
    \label{tab:vlm}
    \resizebox{\linewidth}{!}{%
    \begin{tabular}{lcc}
        \toprule
        \textbf{Method} &
        \textbf{SR (\%)} $\uparrow$ &
        \textbf{Latency (s)} $\downarrow$ \\
        \midrule
        RoboMeter & 50.1 & 0.743 \\
        \ourscell{\textbf{Ours}} & \ourscell{54.7} & \ourscell{0.001} \\
        \bottomrule
    \end{tabular}}
\end{minipage}
\end{table*}

\subsection{Quantitative Analysis}
\label{sec:abl}

In this section, we provide quantitative analyses of \textsc{DiVeR}. Unless otherwise specified, all analyses are conducted on RoboCasa with $N=16$ and report success rates (\%). Additional analyses and ablations are provided in Appendix~\ref{sec:add_analy}.

\textbf{Component analysis.}
We ablate the main components of our method by comparing: (i) the base policy with a single sample (\(N=1\)), denoted \textit{Baseline}; (ii) random selection among \(N\) sampled candidates (\textit{Random selection}); (iii) our verifier trained on action-expert representations with uniform weighting (\textit{Uniform training}); and (iv) the full decision-criticality-weighted verifier (\textit{Ours}).

As shown in~\Cref{tab:component}, \textit{random selection} provides only marginal gains over the \(N=1\) baseline, indicating that additional samples are useful only when paired with an effective selection rule.
\textit{Training the verifier uniformly} on action-expert representations improves performance by \(+2.7\) and \(+7.5\) percentage points over the baseline for \(\pi_0\) and \(\pi_{0.5}\), respectively.
This variant can be viewed as SVM with its image and raw action input replaced by action-expert representations; compared with the original SVM in~\Cref{tab:robocasa}, it further improves performance by \(+1.0\) and \(+3.7\) points, highlighting the benefit of the representation.
Finally, decision-criticality weighting (\textit{Ours}) yields an additional \(+3.2\) and \(+4.1\) points over uniform training, achieving the best overall performance.
These results show that both the action-expert representation and decision-criticality weighting contribute to effective candidate selection.

\textbf{Input feature ablation.}
In~\Cref{tab:action_feature}, we compare verifier inputs based on raw actions, image features combined with action-expert representations, and action-expert representations alone.
Using action-expert representations consistently outperforms raw-action inputs, while additionally incorporating image features provides no further benefit.
This suggests that the action-expert representation already provides a sufficiently informative, policy-conditioned feature space for candidate ranking.
We therefore use action-expert representations alone throughout our method, which also keeps the verifier lightweight and avoids additional visual feature processing.

\textbf{\textsc{DiVeR} is more effective and efficient than a VLM-based verifier.}
Using $\pi_{0.5}$, we compare \textsc{DiVeR} with RoboMeter~\citep{liang2026robometer}, a VLM-based reward model for predicting task progress and trajectory preferences, with candidate visual trajectories obtained via simulator branching.
As shown in~\Cref{tab:vlm},  \textsc{DiVeR} improves the success rate by \(\mathbf{+4.6}\) percentage points while reducing verification latency from \(0.743\)~s to \(0.001\)~s, making the verifier over \(\mathbf{700\times}\) faster.
These results demonstrate that our lightweight action-representation verifier achieves more effective candidate selection with substantially lower inference overhead.

\textbf{Can \textsc{DiVeR} generalize to unseen tasks?}

\begin{wrapfigure}{r}{0.4\columnwidth}
    \centering
    \vspace{-24pt}    \includegraphics[width=\linewidth]{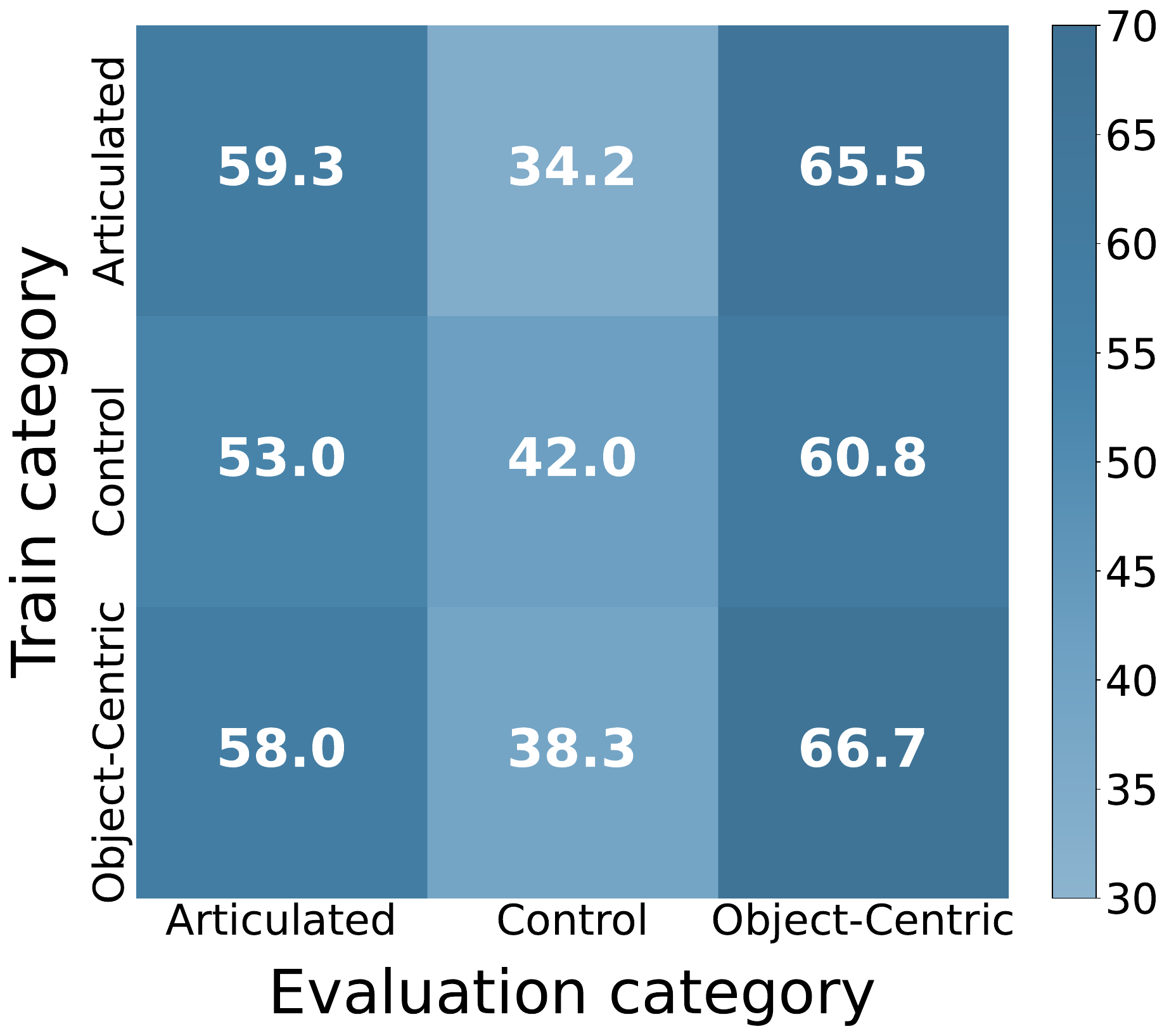}
    \vspace{-16pt}
    \caption{
    \small
    \textbf{Cross-task generalization.}
Success rates (\%) when the verifier is trained on one task category and evaluated on another.
    }
\label{fig:generalization}
    \vspace{-20pt}
\end{wrapfigure}
We evaluate zero-shot task generalization on RoboCasa with $\pi_{0.5}$ by grouping the manipulation tasks into three functional categories: articulated tasks (e.g., opening or closing), control tasks (e.g., turning a lever), and object-centric tasks (e.g., pick-and-place).
We exclude navigation tasks due to their qualitatively different behaviors.
For each source category, we train a verifier on its training tasks and evaluate it on held-out tasks from all three categories.
In~\Cref{fig:generalization}, each entry reports the success rate (\%) for a training--evaluation category pair; diagonal entries correspond to evaluation within the training category, while off-diagonal entries measure cross-category transfer.
As shown, \textsc{DiVeR} transfers effectively across task categories, indicating that the learned verifier generalizes beyond the tasks used for training.

\begin{figure}[t]
    \centering
    \includegraphics[width=\columnwidth]{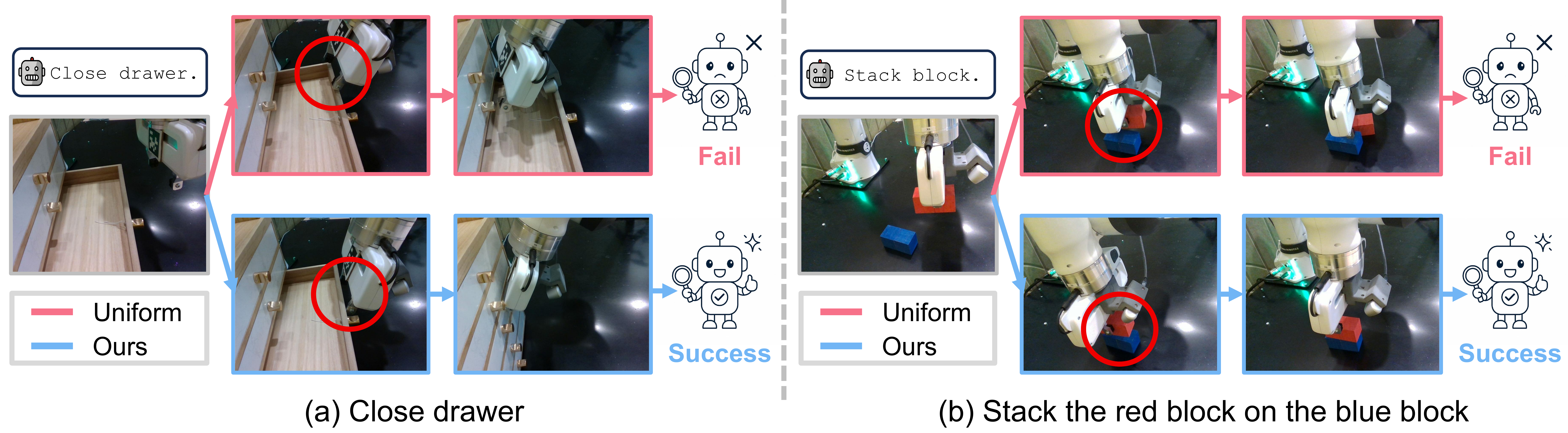}
\caption{
\textbf{Qualitative comparison of verifier-guided action selection.}
Starting from the same decision-critical state, we compare trajectories produced by a \textcolor{myred}{uniformly trained verifier} and \textcolor{myblue}{\textsc{DiVeR} (Ours)}.
In both (a) drawer closing and (b) block stacking, the two verifiers select different action candidates from the same state, leading to different downstream outcomes.
Uniform verifier training selects actions that result in task failure, whereas \textsc{DiVeR} selects more appropriate candidates and successfully completes the task.
\textcolor{red}{Red circles} highlight the resulting behaviors after these action selections.
}
    \label{fig:qualitative}
\end{figure}

\textbf{Verifier architecture.}
In~\Cref{fig:architecture}, we compare MLP, LSTM, and single-layer transformer~\citep{vaswani2017attention} verifiers under a matched parameter budget.

\begin{wrapfigure}{r}{0.35\columnwidth}
    \centering
    \vspace{-24pt}
    \includegraphics[width=\linewidth]{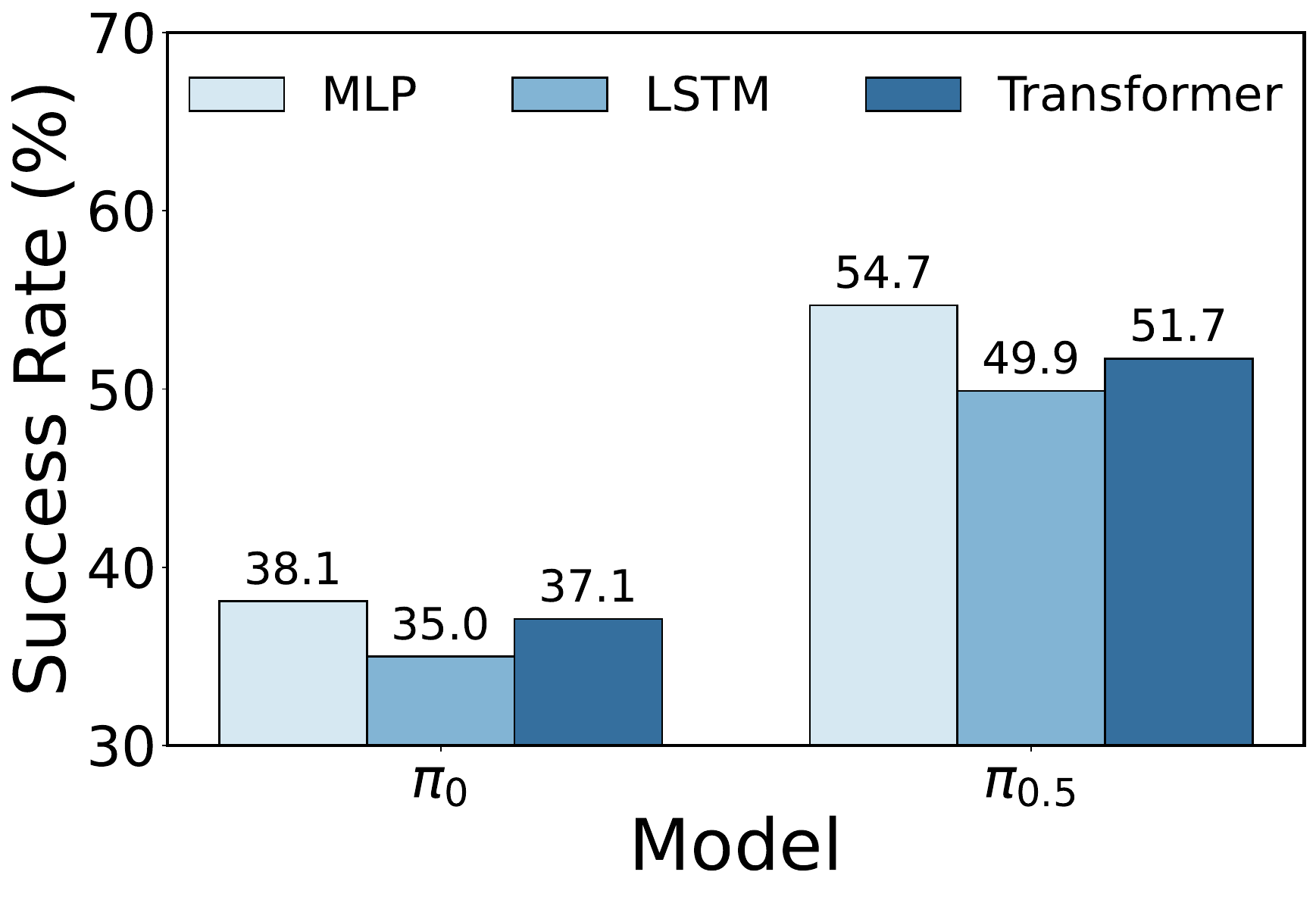}
    \vspace{-18pt}
\captionsetup{justification=centering}
\caption{\small\textbf{Verifier architecture.}}
\label{fig:architecture}
    \vspace{-8pt}
\end{wrapfigure}

All architectures use the same action-expert features, training data, and
optimization protocol, with approximately 66K trainable parameters each.
The MLP performs best across both VLA policies despite its simpler architecture.
This suggests that the frozen action-expert representation provides sufficient
information for candidate discrimination without requiring additional recurrent
or attention-based modeling.
We therefore use the MLP as our default verifier due to its strong performance
and low inference cost.

\subsection{Qualitative Analysis}

\textbf{Action selection at decision-critical states.}
We further compare the behaviors induced by a uniformly trained verifier and \textsc{DiVeR} from the same decision-critical states  on a real-world Franka Research 3 robot.
As shown in~\Cref{fig:qualitative}, the two verifiers select different action candidates from the same state, leading to different downstream outcomes.
(a) For drawer closing, the uniform verifier positions the gripper slightly above the drawer front, causing it to move through free space without making effective contact and ultimately fail.
In contrast, \textsc{DiVeR} selects an action that accurately positions the gripper against the front of the drawer, successfully completing the closing motion.
(b) Similarly, in the block-stacking task, the uniform verifier places the red block slightly beside the blue block rather than on top of it, causing the red block to fall next to the blue block and resulting in task failure.
However, \textsc{DiVeR} positions the grasped red block directly above the blue block and successfully completes the task.
These examples qualitatively illustrate how emphasizing decision-critical states during verifier training can improve candidate selection when action choice is consequential for task completion.

\section{Conclusion}

We introduced \textsc{DiVeR}, a decision-criticality-weighted verifier for VLA test-time scaling.
\textsc{DiVeR} estimates decision criticality from the dispersion of sampled action representations and places greater training emphasis on sparse states where candidate selection is most consequential.
Across LIBERO-Long, RoboCasa, and real-world manipulation tasks, \textsc{DiVeR} consistently improves task success over verifier-free and learned baselines through more effective verifier-guided Best-of-$N$ action selection, while adding negligible verifier inference overhead.
These results suggest that effective VLA test-time scaling benefits not only from generating more action candidates, but also from focusing verifier learning on the states where their differences matter most.

\section*{Acknowledgement}
We gratefully acknowledge Changdae Oh, Dahye Kim, Kinam Kim, Ilia Chelak, and Zeyi Huang for their valuable feedback.
This work was conducted during Seongheon Park's internship at
Microsoft Research Asia--Tokyo.
Seongheon Park and Sharon Li were supported in part by the
AFOSR Young Investigator Program under award FA9550-23-1-0184,
the Office of Naval Research (ONR) under award N000142612508,
the National Science Foundation under awards IIS-2237037 and
IIS-2331669, the Alfred P. Sloan Fellowship, Open Philanthropy
(now Coefficient Giving), and Schmidt Sciences Foundation.

\bibliography{iclr2027_conference}
\bibliographystyle{iclr2027_conference}

\newpage
\appendix

\vspace*{\fill}
\begin{center}
\vspace{-3cm}
    \LARGE \textbf{Appendix}
    \vspace{1em}
\end{center}

\vspace{0.2em}
\noindent\rule{\textwidth}{0.4pt}

\tableofcontents

\addtocontents{toc}{\protect\setcounter{tocdepth}{2}}
\vspace{0.2em}
\noindent\rule{\textwidth}{0.4pt}
\vspace{0.2em}
\vspace*{\fill}

\newpage

\newpage
\newpage
\newpage
\section{Algorithm}
\begin{algorithm}[h]
\caption{Decision-weighted verifier learning and Best-of-$N$ inference}
\label{alg:1}
\begin{algorithmic}[1]

\Require Frozen VLA policy $\policy$, rollout trajectories $\mathcal{T}$,
criticality samples $K$, test-time candidates $N$, training epochs $E$,
weighting parameter $\beta$

\Statex \textbf{1. Decision-Weighted Dataset Construction}
\State $\mathcal{D} \gets \emptyset$

\For{each trajectory $\tau \in \mathcal{T}$ with outcome $y_\tau$}
    \For{$t=1,\ldots,T_\tau$}
        \State Extract the executed-action representation $z_t$
        \State Sample $K$ candidate action chunks
        $\{\widetilde{\mathbf{a}}_t^{(k)}\}_{k=1}^{K}
        \sim \policy(\cdot \mid s_t)$
        \State Extract candidate representations
        $\{z_t^{(k)}\}_{k=1}^{K}$
        \State Compute criticality score $u_t$ using~\Cref{eq:action_uncertainty}
    \EndFor
    \State Compute trajectory statistics $\mu_\tau$ and $\sigma_\tau$
    over $\{u_t\}_{t=1}^{T_\tau}$
    \For{$t=1,\ldots,T_\tau$}
        \State Compute standardized criticality $\tilde{u}_t$ and weight $w_t$
        using~\Cref{eq:decision_weight}
        \State Set $y_t \gets y_\tau$
        \State Add $(z_t,y_t,w_t)$ to $\mathcal{D}$
    \EndFor
\EndFor

\Statex \textbf{2. Verifier Training}
\For{$e=1,\ldots,E$}
    \For{each class-balanced minibatch $\mathcal{B}\subset\mathcal{D}$}
        \State Update $\theta$ using the decision-weighted BCE
        in~\Cref{eq:weighted_bce}
    \EndFor
\EndFor

\Statex \textbf{3. Verifier-Guided Best-of-$N$ Inference}
\For{each decision step $t$}
    \State Sample $N$ candidate action chunks
    $\{\mathbf{a}_t^{(i)}\}_{i=1}^{N}
    \sim \policy(\cdot\mid s_t)$
    \State Extract action-expert representations
    $\{z_t^{(i)}\}_{i=1}^{N}$
    \State Score each candidate using the verifier logit
    $\widehat{f}_\theta(z_t^{(i)})$
    \State Select and execute the highest-scoring candidate
    using~\Cref{eq:bon}
\EndFor

\end{algorithmic}
\end{algorithm}

\section{Experimental Details}
\label{sec:experimental_details}

\subsection{Implementation}
\label{sec:implementation}

\textbf{Verifier architecture and representation extraction.}
We implement the verifier as a two-layer MLP operating on internal hidden representations from the VLA action expert.
The MLP uses hidden dimensions of 64 with a dropout rate of 0.3, followed by a sigmoid output layer.
We extract the final-layer action-token representations at the final denoising step and average them across action tokens. The resulting representation is used for both decision-criticality estimation and verifier training.
Ablations on the representation layer and token position are provided in~\Cref{sec:representation_extraction}.

\textbf{Training.}
We train the verifier for 30 epochs using AdamW~\citep{loshchilov2017decoupled}
with a learning rate of $10^{-3}$, weight decay of $10^{-4}$, batch size 512,
and class-balanced sampling.
For decision-weighted training, we set $\beta=0.5$ and clip
$w_t$ to $[c^{-1},c]$ with $c=2.5$ to limit the influence of
extreme criticality values.
We sample $K=32$ action candidates at each training state to estimate
decision criticality from action-representation variance.
The VLA policy remains frozen throughout training.

\textbf{Software and hardware.}
Our implementation uses PyTorch~\citep{paszke2019pytorch}.
All training and inference experiments are conducted on a single NVIDIA RTX
A6000 GPU with 48\,GB of memory.

\subsection{Hyperparameters}
\vspace{0.2cm}

\begin{table}[h!]
\centering
\caption{
Hyperparameter search space applied uniformly across all experiments.
Selected values are \textbf{bolded}.
}
\label{tab:global_param}
\setlength{\tabcolsep}{6pt}
\renewcommand{\arraystretch}{1.05}
\begin{tabular}{@{} l c @{}}
\toprule
\textbf{Hyperparameter} & \textbf{Search Space} \\
\midrule

\multicolumn{2}{@{}l}{\textit{Optimization}} \\
\quad Optimizer
    & \{Adam, \textbf{AdamW}, SGD\} \\
\quad Learning rate
    & \{1e-2, 5e-3, 1e-3, \textbf{5e-4}, 1e-4\} \\
\quad Training epochs
    & \{10, \textbf{30}, 50, 100\} \\

\midrule
\multicolumn{2}{@{}l}{\textit{Architecture}} \\
\quad MLP hidden size
    & \{\textbf{64}, 256, 512\} \\

\midrule
\multicolumn{2}{@{}l}{\textit{Decision weighting}} \\
\quad Weight temperature $\beta$
    & \{0.1, 0.25, \textbf{0.5}, 1.0, 2.0\} \\
\quad Weight clipping threshold $c$
    & \{1.5, 2.0, \textbf{2.5}, 3.0, 5.0\} \\

\bottomrule
\end{tabular}
\end{table}

We tune the verifier hyperparameters on RoboCasa using $\pi_{0.5}$ and then
fix the selected configuration for all remaining benchmarks and VLA policies.
This protocol avoids benchmark-specific tuning and evaluates whether a single
hyperparameter configuration transfers across different environments and policy
backbones.

\subsection{Dataset statistics}
\label{sec:dataset_statistics}
\vspace{0.1cm}

\begin{table}[h!]
    \centering
    \small
 \caption{
    \textbf{Dataset statistics.}
    Number of successful and failed episodes and their corresponding
    environment steps in the offline rollout datasets collected with
    (a)~\(\pi_0\) and (b)~\(\pi_{0.5}\).
    }
    \begin{subtable}[h]{0.49\linewidth}
        \centering

        \label{tab:dataset_statistics_pi0}
        \caption{\(\pi_0\)}
    \resizebox{\linewidth}{!}{
        \begin{tabular}{lrrrr}
            \toprule
            & \multicolumn{2}{c}{Success} & \multicolumn{2}{c}{Failure} \\
            \cmidrule(lr){2-3} \cmidrule(lr){4-5}
            Dataset & \# Episodes & \# Steps & \# Episodes & \# Steps \\
            \midrule
            LIBERO & 249 & 66,320 & 51 & 26,520 \\
            RoboCasa    & 184 & 60,285 & 356 & 225,600 \\
            Real Robot  & - & - & - & - \\
            \bottomrule
        \end{tabular}
        }
         
    \end{subtable}
    \hfill
    \begin{subtable}[h]{0.49\linewidth}
        \centering
     
        \label{tab:dataset_statistics_pi05}
           \caption{\(\pi_{0.5}\)}\resizebox{\linewidth}{!}{
        \begin{tabular}{lrrrr}
            \toprule
            & \multicolumn{2}{c}{Success} & \multicolumn{2}{c}{Failure} \\
            \cmidrule(lr){2-3} \cmidrule(lr){4-5}
            Dataset & \# Episodes & \# Steps & \# Episodes & \# Steps \\
            \midrule
            LIBERO & 281 & 73,140 & 19 & 9,880 \\
            RoboCasa    & 248 & 92,985 & 292 & 120,900 \\
             Real Robot  & 94 & 12,387 & 66 & 12,012 \\
             \bottomrule
        \end{tabular}
        }
        
    \end{subtable}
    \label{tab:dataset_statistics}
\end{table}

Our verifier training data are collected directly from rollouts of the VLA policy, where both successful and failed episodes arise naturally from policy execution.
As shown in~\Cref{tab:dataset_statistics}, these datasets exhibit class imbalance across policies and environments.
Despite this imbalance, our verifier can be trained effectively without artificially balancing the outcome distribution during data collection.

\subsection{Evaluation Tasks}

We evaluate our method across three settings: the LIBERO-Long benchmark,
the RoboCasa atomic-task benchmark, and a real-world manipulation setup
using a Franka Research 3 robot.
The complete task lists are provided in
\Cref{tab:real_tasks,tab:task_libero,tab:task_robocasa}.

\begin{table}[h!]
\centering
\small
\setlength{\tabcolsep}{5pt}
\renewcommand{\arraystretch}{1.08}
\caption{\textbf{Real-world robot evaluation tasks.}
We evaluate on four manipulation tasks spanning object placement,
stacking, and articulated-object interaction.}
\label{tab:real_tasks}
\begin{tabular}{cll}
\toprule
\textbf{ID} & \textbf{Task} & \textbf{Skill} \\
\midrule
1 & Place the green block in the bowl & Pick \& Place \\
2 & Place the cup on the plate & Pick \& Place \\
3 & Stack the red block on the blue block & Stacking \\
4 & Close the drawer & Articulated Manipulation \\
\bottomrule
\end{tabular}
\label{tab:task_real}
\end{table}

\begin{table*}[h!]
\centering
\small
\setlength{\tabcolsep}{5pt}
\renewcommand{\arraystretch}{1.08}
\caption{\textbf{Tasks in the LIBERO-Long benchmark.}
We evaluate on all 10 long-horizon manipulation tasks in the standard
LIBERO-Long suite.}
\label{tab:libero_tasks}
\begin{tabular}{cl}
\toprule
\textbf{ID} & \textbf{Task} \\
\midrule
1  & Put both the alphabet soup and the tomato sauce in the basket \\
2  & Put both the cream cheese box and the butter in the basket \\
3  & Turn on the stove and put the moka pot on it \\
4  & Put the black bowl in the bottom drawer of the cabinet and close it \\
5  & Put the white mug on the left plate and put the yellow and white mug on the right plate \\
6  & Pick up the book and place it in the back compartment of the caddy \\
7  & Put the white mug on the plate and put the chocolate pudding to the right of the plate \\
8  & Put both the alphabet soup and the cream cheese box in the basket \\
9  & Put both moka pots on the stove \\
10 & Put the yellow and white mug in the microwave and close it \\
\bottomrule
\end{tabular}
\label{tab:task_libero}
\end{table*}

\begin{table*}[h!]
\centering
\small
\setlength{\tabcolsep}{4.5pt}
\renewcommand{\arraystretch}{1.08}
\caption{\textbf{RoboCasa evaluation tasks.}
The 18 Atomic-Seen tasks span four broad interaction categories:
Articulated (7), Control (4), Object-Centric (6), and Navigation (1).
These categories consolidate the finer-grained atomic skills defined
by RoboCasa365.}
\label{tab:robocasa_tasks}
\begin{tabular}{cll}
\toprule
\textbf{Category} & \textbf{Task} & \textbf{RoboCasa Skill} \\
\midrule

\multirow{7}{*}{\shortstack[l]{\textbf{Articulated}}}
& Close blender lid          & Open/Close Lid \\
& Close fridge               & Open/Close Door \\
& Close toaster oven door    & Open/Close Door \\
& Open cabinet               & Open/Close Door \\
& Open drawer                & Open/Close Drawer \\
& Open stand mixer head      & Open/Close Lid \\
& Slide dishwasher rack      & Slide Rack \\

\midrule

\multirow{4}{*}{\shortstack[l]{\textbf{Control}}}
& Turn off stove             & Twist Knob \\
& Turn on sink faucet        & Turn Lever \\
& Turn on electric kettle    & Press Button \\
& Turn on microwave          & Press Button \\

\midrule

\multirow{6}{*}{\shortstack[l]{\textbf{Object-Centric}}}
& Coffee setup mug                        & Insertion \\
& Pick place: counter $\rightarrow$ cabinet & Pick \& Place \\
& Pick place: counter $\rightarrow$ stove   & Pick \& Place \\
& Pick place: drawer $\rightarrow$ counter  & Pick \& Place \\
& Pick place: sink $\rightarrow$ counter    & Pick \& Place \\
& Pick place: toaster $\rightarrow$ counter & Pick \& Place \\

\midrule

\multirow{1}{*}{\shortstack[l]{\textbf{Navigation}}}
& Navigate kitchen & Navigation \\

\bottomrule
\end{tabular}
\label{tab:task_robocasa}
\end{table*}

\section{Additional Analysis}
\label{sec:add_analy}

\subsection{Does Measured Criticality Reflect Actual Decision Consequence?}
\label{sec:criticality_validation}

\begin{wrapfigure}{r}{0.3\columnwidth}
    \centering
    \vspace{-10pt}
\includegraphics[width=\linewidth]{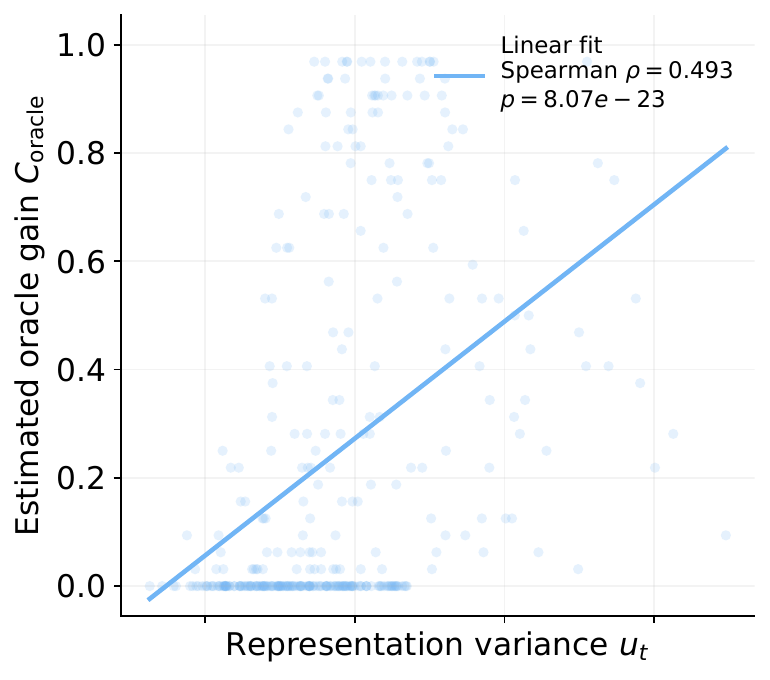}
    \vspace{-20pt}
   \caption{\small
Measured criticality $u_t$ vs. oracle criticality $C_{\mathrm{oracle}}$.
}
\label{fig:criticality_validation}
\vspace{-30pt}
\end{wrapfigure}
Our method uses action-representation variance as a rollout-free proxy for decision criticality, based on the intuition that candidate selection should matter more when plausible actions diverge.
We directly test this assumption by comparing $u_t$ with rollout-based estimates of the downstream consequence of selecting different action candidates.

We randomly sample 500 states from RoboCasa using $\pi_{0.5}$. For each saved simulator state $s_t$, we sample $K$ action candidates $\{a_t^{(k)}\}_{k=1}^{K}$ and compute the corresponding representation-based criticality $u_t$. To measure the downstream consequence of each candidate, we branch the simulator from the same state, execute $a_t^{(k)}$, and roll out the frozen base policy to estimate its success probability~\citep{wang2024math},
\[
Q(s_t,a_t^{(k)})
=
P(\mathrm{success}\mid s_t,a_t^{(k)},\pi_{0.5}).
\]

We define rollout-based oracle criticality as the potential gain from selecting the best candidate rather than choosing uniformly at random:
\[
C_{\mathrm{oracle}}(s_t)
=
\max_k Q(s_t,a_t^{(k)})
-
\frac{1}{K}\sum_{k=1}^{K}Q(s_t,a_t^{(k)}).
\]
This quantity is small when candidate selection has little influence on the final outcome and large when selecting the appropriate action is consequential. As shown in~\Cref{fig:criticality_validation}, our measured criticality $u_t$ is positively correlated with rollout-based oracle criticality (Spearman $\rho=0.493$, $p=8.09\times10^{-23}$). Together with the gains from criticality-weighted verifier learning over uniform training in~\Cref{tab:real_robot,tab:component} and the stronger downstream performance of action-representation features over raw actions in~\Cref{tab:action_feature}, these results provide complementary evidence that action-representation variance identifies states where candidate selection has meaningful downstream consequences and provides an effective signal for verifier learning.

\subsection{Autoregressive Model} \label{sec:autoregressive} \begin{wraptable}{r}{0.35\columnwidth} \centering \vspace{-16pt} \caption{ Success rate (\%) with OpenVLA using $N=4$ candidates on LIBERO-Long. } \label{tab:openvla_autoregressive} \setlength{\tabcolsep}{6pt} \renewcommand{\arraystretch}{1.05} \begin{tabular}{lc} \toprule \textbf{Method} & \textbf{Success (\%)} \\ \midrule 
Base & 52.5 \\
MGS & 54.5 \\ SVM & 55.0 \\ \ourscell{\textbf{Ours}} & \ourscell{\textbf{55.5}} \\ \bottomrule \end{tabular} \vspace{-8pt} \end{wraptable} While our main experiments focus on flow-matching VLAs, we further evaluate \textsc{DiVeR} with the autoregressive OpenVLA model to assess its applicability across different action-generation paradigms. As shown in~\Cref{tab:openvla_autoregressive}, with $N=4$ candidates, \textsc{DiVeR} improves the success rate by +$3.0$ percentage points over the pretrained policy, while outperforming MGSelect and SVM by +$1.0$ and +$0.5$ percentage points, respectively. These results indicate that our verifier-guided selection framework is not specific to flow-matching policies and can also be effectively applied to autoregressive VLAs.

\subsection{Inference Latency of Action Sampling}
\label{sec:infer_action}

\begin{wrapfigure}{r}{0.35\columnwidth}
    \centering
    \vspace{-16pt}
    \includegraphics[width=\linewidth]{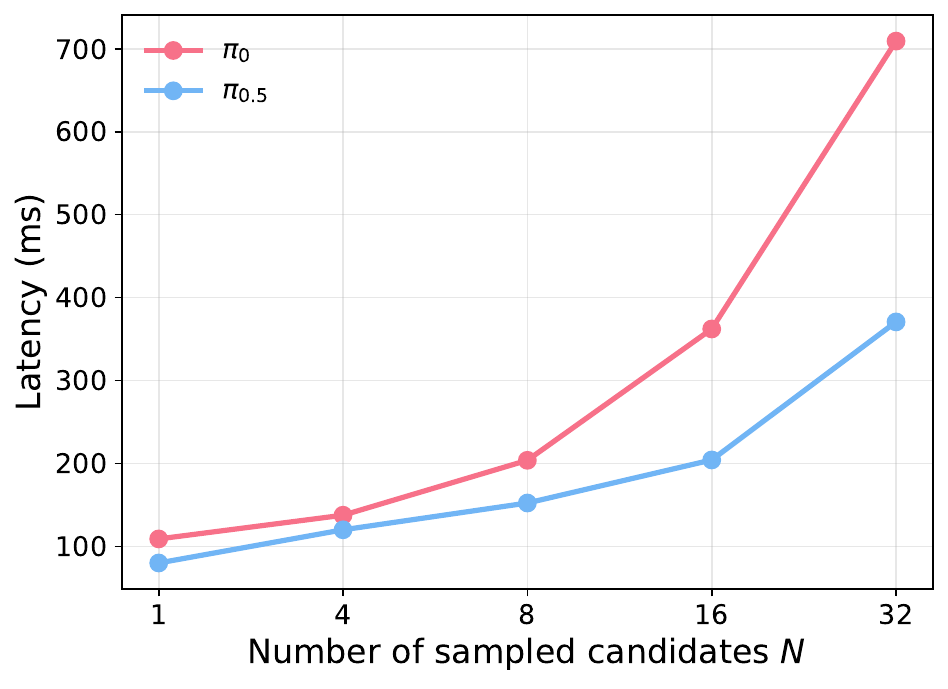}
    \vspace{-14pt}
    \caption{
    \small
    Chunk-level action sampling latency.
    }
    \label{fig:action_sampling_latency}
    \vspace{-8pt}
\end{wrapfigure}

We measure the chunk-level inference latency required to generate multiple
action candidates from the frozen VLA policy.
As shown in~\Cref{fig:action_sampling_latency}, sampling latency increases with
the number of candidates $N$, reaching approximately $360$\,ms for $\pi_0$ and
$205$\,ms for $\pi_{0.5}$ at $N=16$.
We use $N=16$ as the default inference setting, providing a practical trade-off
between candidate diversity and sampling cost.
In comparison, our lightweight verifier requires \textit{less than $0.5$\,ms} to score
each action chunk, adding minimal overhead relative to VLA action generation.
Thus, the inference cost of \textsc{DiVeR} is dominated by candidate sampling
rather than verifier evaluation.

\subsection{Effect of the Number of Samples $K$ for Criticality Estimation} \label{sec:criticality_samples}
\begin{wraptable}{r}{0.35\columnwidth} \centering \vspace{-16pt} \caption{ Effect of the number of samples $K$ used for criticality estimation. } \label{tab:criticality_samples} \setlength{\tabcolsep}{7pt} \renewcommand{\arraystretch}{1.05} \begin{tabular}{@{}cc@{}} \toprule \textbf{$K$} & \textbf{Success (\%)} \\ \midrule 4 & 52.8 \\ 16 & 53.9 \\ 32 & 54.7 \\ 64 & 55.0 \\ \bottomrule \end{tabular} \vspace{-8pt} \end{wraptable}
We study the effect of the number of action samples $K$ used to estimate state criticality on RoboCasa with $\pi_{0.5}$.
As shown in~\Cref{tab:criticality_samples}, performance improves as $K$ increases from 4 to 32, indicating that a larger candidate set provides a more reliable estimate of action disagreement. 
Increasing $K$ further to 64 yields only marginal improvement while requiring additional sampling cost.
We therefore use $K=32$ throughout our experiments, which provides a favorable trade-off between the quality of criticality estimation quality and computational cost during data collection.

\subsection{Representation Extraction}
\label{sec:representation_extraction}

\begin{table}[h!]
    \centering
    \small
    \caption{
    \textbf{Representation extraction ablation on RoboCasa with $\pi_{0.5}$.}
    Success rate (\%) under different
    (a) action-token positions and
    (b) action-expert layers.
    }
    \label{tab:representation_extraction}

    \begin{minipage}{0.7\linewidth}
        \centering

        \begin{subtable}[t]{0.46\linewidth}
            \centering
            \caption{Token position}
            \label{tab:token_position}
            \resizebox{\linewidth}{!}{
            \begin{tabular}{lc}
                \toprule
                \textbf{Position} & \textbf{Success Rate (\%)} \\
                \midrule
                First   & 53.6 \\
                Last    & 51.5 \\
                Average & \textbf{54.7} \\
                \bottomrule
            \end{tabular}}
        \end{subtable}
        \hspace{0.04\linewidth}
        \begin{subtable}[t]{0.46\linewidth}
            \centering
            \caption{Action-expert layer}
            \label{tab:representation_layer}
            \resizebox{\linewidth}{!}{
            \begin{tabular}{lc}
                \toprule
                \textbf{Layer} & \textbf{Success Rate (\%)} \\
                \midrule
                First  & 52.9 \\
                Middle & 44.1 \\
                Last   & \textbf{54.7} \\
                \bottomrule
            \end{tabular}}
        \end{subtable}

    \end{minipage}
\end{table}

We study how the choice of action-token position and action-expert layer
affects verifier performance.
This ablation is conducted on RoboCasa using $\pi_{0.5}$.
We compare representations extracted from the first, last, or averaged
action-token positions, as well as from the first, middle, or final
action-expert layers.
As shown in~\Cref{tab:representation_extraction}, averaging the action-token
representations yields the highest success rate among the token-position
variants, while the final action-expert layer performs best among the
layer choices.
We therefore use the average action-token representation from the final
action-expert layer as our default representation for both
decision-criticality estimation and verifier training.

\subsection{Visualization of Action-Expert Representations}
\label{sec:visualized_rep}

\begin{figure}[h!]
    \centering

    \begin{minipage}{0.8\columnwidth}
        \centering

        \begin{subfigure}[t]{0.5\linewidth}
            \centering
            \includegraphics[width=\linewidth]{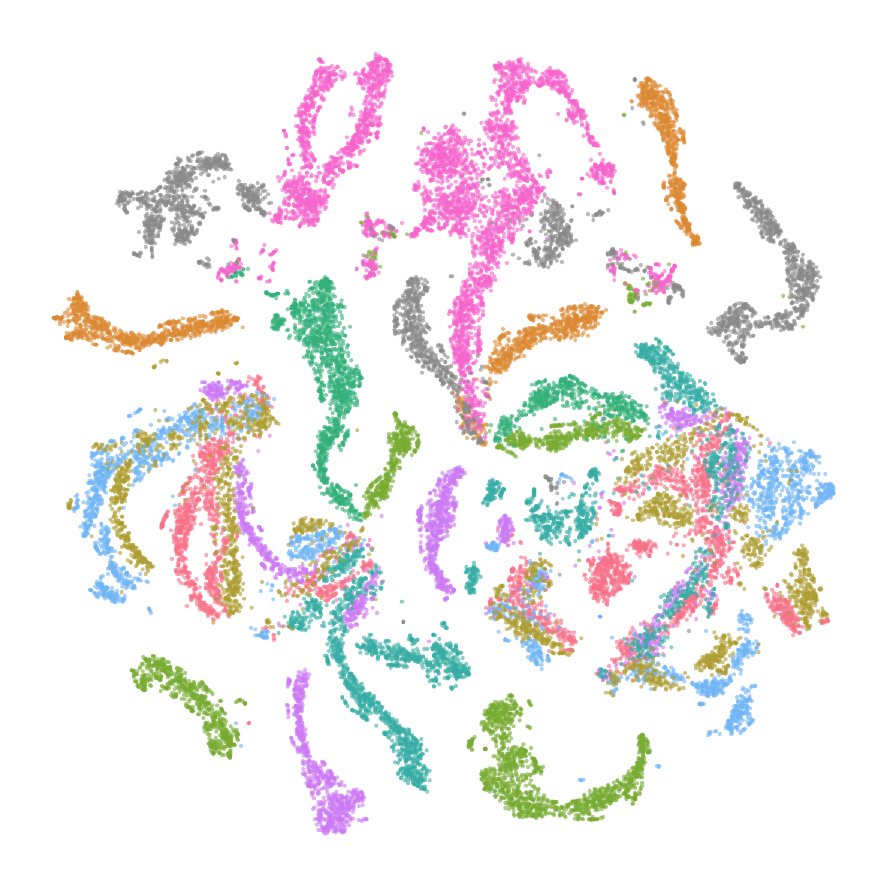}
            \caption{Task-wise structure}
            \label{fig:pi0_action}
        \end{subfigure}
        \hfill
        \begin{subfigure}[t]{0.48\linewidth}
            \centering
            \includegraphics[width=\linewidth]{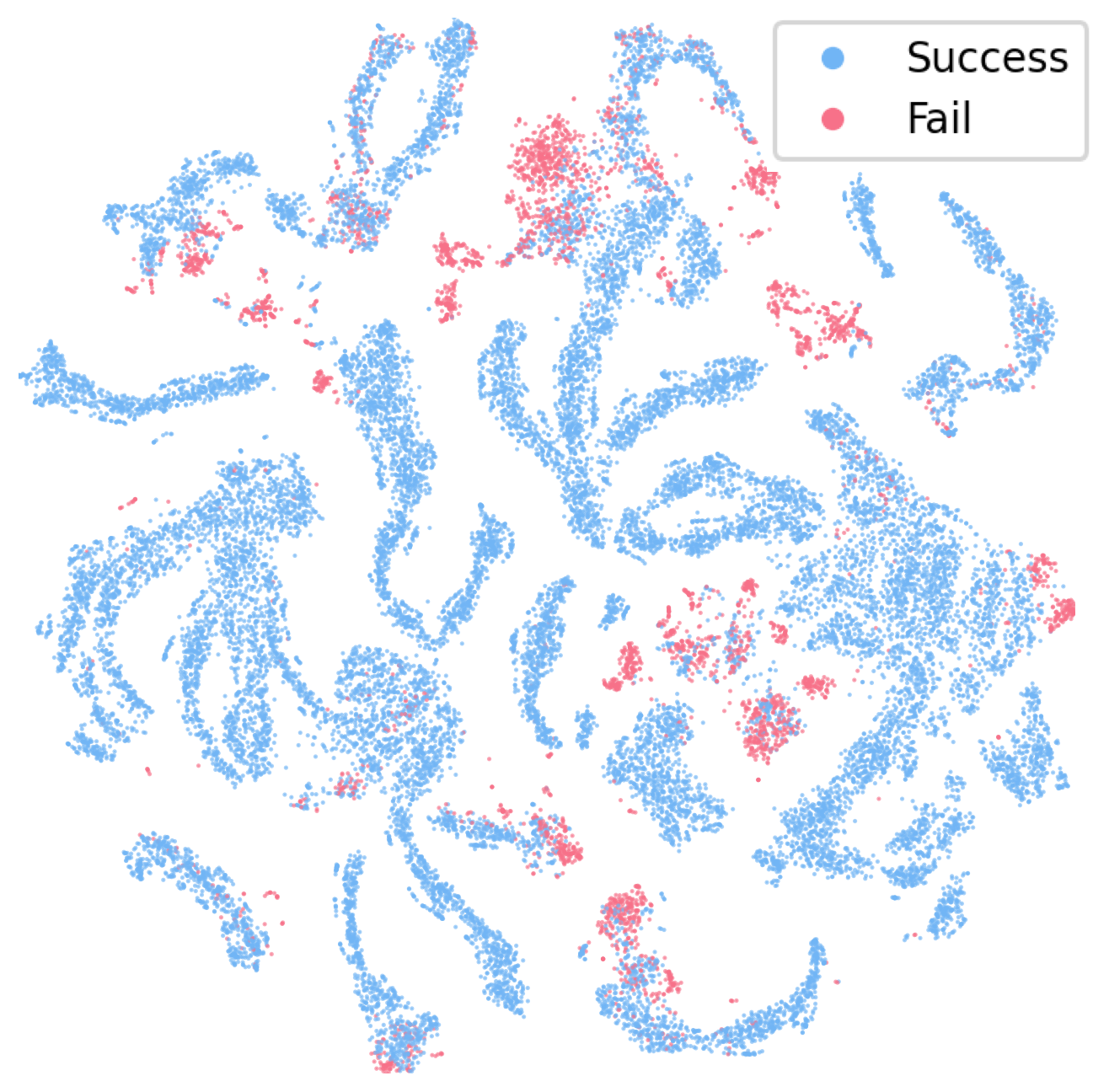}
            \caption{Success vs.\ failure}
            \label{fig:pi0_sf}
        \end{subfigure}

    \end{minipage}

    \caption{
    \textbf{t-SNE visualization of action-expert representations.}
    We visualize action-expert representations from a frozen $\pi_0$ policy over 500 LIBERO-Long episodes.
    (a) Different colors denote different tasks, revealing task-dependent structure in the representation space.
    (b) Successful and failed trajectories occupy distinct regions of the representation space.
    }
    \label{fig:action_expert}
\end{figure}

We further analyze the internal action-expert representations used by the verifier.
As shown in~\Cref{fig:action_expert}(a), representations exhibit structured task dependent organization, with samples from the same task tending to occupy similar regions of the embedding space.
This suggests that the representation retains task related context that can support a single verifier shared across multiple tasks.

We further examine whether the representation contains information related to behavioral outcomes.
For this analysis, representations after failure onset are labeled as failure representations.
As shown in~\Cref{fig:action_expert}(b), successful and failed behaviors exhibit partially distinct patterns, with failure representations concentrated in particular regions of the embedding space.
This suggests that the internal representation contains outcome related information that can provide useful features for verifier learning.

Together, these visualizations suggest that the action-expert representation contains both task related structure and outcome related variation.
While the visualization alone does not establish that the outcome related structure is fully shared across unseen tasks, it is consistent with the cross task generalization observed in~\Cref{fig:generalization}.

\section{Derivation of the Success-Visitation Ratio}
\label{app:success_visitation_derivation}

We provide the derivation showing that the logit of a binary
discriminator trained to distinguish successful from failed
state--action visitation recovers their log density ratio from~\Cref{sec:success_visitation}.
Let $\rho^{+}(s,a)$ and $\rho^{-}(s,a)$ denote the normalized
state--action visitation densities induced by successful and failed
trajectories, respectively. Assuming balanced sampling between the two
classes, we train a discriminator $f(s,a)\in(0,1)$ with the binary
cross-entropy objective
\begin{equation}
\mathcal{L}(f)
=
-\mathbb{E}_{(s,a)\sim\rho^{+}}
    [\log f(s,a)]
-\mathbb{E}_{(s,a)\sim\rho^{-}}
    [\log(1-f(s,a))].
\end{equation}

\begin{proposition}
The population-optimal discriminator satisfies
\begin{equation}
f^{\star}(s,a)
=
\frac{\rho^{+}(s,a)}
{\rho^{+}(s,a)+\rho^{-}(s,a)}.
\end{equation}
Consequently, its logit recovers the log success-to-failure visitation
density ratio:
\begin{equation}
\log\frac{f^{\star}(s,a)}
{1-f^{\star}(s,a)}
=
\log\frac{\rho^{+}(s,a)}
{\rho^{-}(s,a)}.
\end{equation}
\end{proposition}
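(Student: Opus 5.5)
The plan is to rewrite the population objective $\mathcal{L}(f)$ as a single integral over the state--action space and minimize the integrand pointwise. Let $\mu$ be a common dominating measure (Lebesgue or counting), so that $\rho^{+}$ and $\rho^{-}$ are densities with respect to $\mu$. Then
\begin{equation}
\mathcal{L}(f)=-\int \Bigl[\rho^{+}(s,a)\log f(s,a)+\rho^{-}(s,a)\log\bigl(1-f(s,a)\bigr)\Bigr]\,d\mu(s,a).
\end{equation}
Since $f$ ranges over all measurable functions into $(0,1)$, with no coupling between different points, it suffices to minimize the integrand separately at each fixed $(s,a)$. Measurability of the resulting pointwise minimizer is automatic, because it is an explicit ratio of measurable densities.

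For fixed $(s,a)$ with $p=\rho^{+}(s,a)$ and $q=\rho^{-}(s,a)$, I will study $g(x)=-p\log x-q\log(1-x)$ on $x\in(0,1)$.

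\emph{Case $p,q>0$.} Here $g$ is strictly convex, since $g''(x)=p/x^{2}+q/(1-x)^{2}>0$, and $g\to\infty$ at both endpoints. The stationarity condition $-p/x+q/(1-x)=0$ therefore gives the unique minimizer $x^{\star}=p/(p+q)$, which is the claimed $f^{\star}$.

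\emph{Case where one density vanishes.} If only one of $p,q$ is zero, the infimum of $g$ is approached at the boundary, $x\to 1$ or $x\to 0$. This agrees with the formula $p/(p+q)$ in the extended sense, but the logit is $\pm\infty$ there. If both vanish, the point contributes nothing to $\mathcal{L}$ and $f$ is unconstrained.

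\emph{Where the argument needs care.} The main (mild) obstacle is not the algebra but these degenerate regions. I would state the result on the set where $\rho^{+}$ and $\rho^{-}$ are both positive, and note that there $f^{\star}$ is optimal $\mu$-a.e. A minimizer is only determined up to $\mu$-null sets and outside the support, so uniqueness should be stated in that sense. I would also add a remark that finiteness of $\mathcal{L}(f^{\star})$ follows from $\mathcal{L}(f^{\star})\le\mathcal{L}(1/2)=2\log 2$. This guarantees that the pointwise minimization is legitimate rather than a comparison of infinite quantities.

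The logit identity then follows by direct substitution: $1-f^{\star}=\rho^{-}/(\rho^{+}+\rho^{-})$. The ratio $f^{\star}/(1-f^{\star})$ therefore equals $\rho^{+}/\rho^{-}$, and taking logarithms gives the second display. Finally, I would remark that the balanced-sampling assumption is exactly what makes the two expectations carry equal weight. With class priors $\pi^{\pm}$, the same argument yields $\log\frac{\rho^{+}}{\rho^{-}}+\log\frac{\pi^{+}}{\pi^{-}}$. This constant shift does not affect the $\arg\max$ in~\Cref{eq:bon}.
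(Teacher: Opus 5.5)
Your proposal is correct and follows the same route as the paper: minimize $-p\log f - q\log(1-f)$ pointwise using the first-order condition and strict convexity, then substitute $1-f^{\star}=\rho^{-}/(\rho^{+}+\rho^{-})$ to get the logit identity. Your separate treatment of the case $pq=0$ is a small improvement in rigor. The paper asserts a unique minimizer whenever $p+q>0$, but when one density vanishes the optimum $p/(p+q)\in\{0,1\}$ lies outside $(0,1)$, so it is only approached, not attained.
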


\begin{proof}
Because the objective decomposes pointwise over state--action pairs,
we can minimize the contribution of each $(s,a)$ independently.
For a fixed $(s,a)$, let
\[
p=\rho^{+}(s,a),
\qquad
q=\rho^{-}(s,a),
\]
and write $f=f(s,a)$. The corresponding pointwise loss is
\begin{equation}
\ell(f)
=
-p\log f-q\log(1-f).
\end{equation}
Differentiating with respect to $f$ gives
\begin{equation}
\frac{\partial \ell}{\partial f}
=
-\frac{p}{f}
+
\frac{q}{1-f}.
\end{equation}
Setting the derivative to zero,
\begin{equation}
-\frac{p}{f}
+
\frac{q}{1-f}
=0,
\end{equation}
which implies
\begin{equation}
qf=p(1-f).
\end{equation}
Therefore,
\begin{equation}
f^{\star}
=
\frac{p}{p+q}
=
\frac{\rho^{+}(s,a)}
{\rho^{+}(s,a)+\rho^{-}(s,a)}.
\end{equation}
Moreover,
\begin{equation}
\frac{\partial^2 \ell}{\partial f^2}
=
\frac{p}{f^2}
+
\frac{q}{(1-f)^2}
>0,
\end{equation}
so this stationary point is the unique minimizer whenever
$p+q>0$.

Finally,
\begin{align}
\frac{f^{\star}(s,a)}
     {1-f^{\star}(s,a)}
&=
\frac{
\rho^{+}(s,a)/
(\rho^{+}(s,a)+\rho^{-}(s,a))
}{
\rho^{-}(s,a)/
(\rho^{+}(s,a)+\rho^{-}(s,a))
}
\\
&=
\frac{\rho^{+}(s,a)}
     {\rho^{-}(s,a)}.
\end{align}
Taking the logarithm of both sides yields
\begin{equation}
\log\frac{f^{\star}(s,a)}
{1-f^{\star}(s,a)}
=
\log\frac{\rho^{+}(s,a)}
{\rho^{-}(s,a)},
\end{equation}
which proves the result.
\end{proof}

\paragraph{Effect of decision weighting.}
The proposition above characterizes the unweighted BCE objective and
provides the density-ratio interpretation of our base verifier.
\textsc{DiVeR} further reweights this objective according to decision criticality,
placing greater learning emphasis on states where accurate candidate
discrimination is more consequential. Accordingly, we interpret the weighted
verifier score as a candidate-ranking score rather than an exact density-ratio
estimate.

\section{Baselines}
\label{sec:baselines}

All selection rules operate under an identical Best-of-$N$ protocol. At each decision step $t$, we
draw a single candidate set
$\{\mathbf{a}_t^{(i)}\}_{i=1}^{N}\sim\pi_{\mathrm{pre}}(\cdot\mid s_t)$ from the same frozen policy
and share it across all methods, so that differences in performance are attributable solely to the
selection rule rather than to the proposal distribution. Each method defines a scalar score
$g(s_t,\mathbf{a}_t^{(i)})\in\mathbb{R}$, and the robot executes
$\mathbf{a}_t^{\star}=\arg\max_{i\in[N]} g(s_t,\mathbf{a}_t^{(i)})$ as in~\Cref{eq:bon}.
We write $\mathbf{a}\in\mathbb{R}^{H\cdot d_a}$ for the flattened,
normalized action chunk, where $H$ is the action horizon and $d_a$ is
the per-step action dimension, and $z\in\mathbb{R}^{d}$ for the
action-expert representation defined in Section~\ref{sec:method}.
Learned baselines are trained on the same offline rollout dataset, with the same optimizer, batch
size, and class-balanced sampling as our verifier.

\paragraph{KDPE~\citep{rosasco2025kdpe}.}
KDPE is verifier-free and selects the candidate that best represents the policy's own action
distribution, discarding low-density (potentially out-of-distribution) samples. Using the $N$
candidates as the sample set, it forms a leave-one-out kernel density estimate in action-chunk space,
\begin{equation}
\widehat{p}_h(\mathbf{a}\mid s_t)
=\frac{1}{N-1}\sum_{j\neq i} K_h\!\left(\mathbf{a},\mathbf{a}_t^{(j)}\right),
\qquad
K_h(x,y)\propto\exp\!\left(-\frac{\lVert x-y\rVert_2^{2}}{2h^{2}}\right),
\label{eq:kdpe}
\end{equation}
and scores each candidate by its estimated density,
$g_{\mathrm{KDPE}}(s_t,\mathbf{a}_t^{(i)})=\widehat{p}_h(\mathbf{a}_t^{(i)}\mid s_t)$.
The bandwidth is set per state by Silverman's rule, $h=\widehat{\sigma}\,N^{-1/(d_{\mathrm{eff}}+4)}$,
where $\widehat{\sigma}$ is the mean per-coordinate standard deviation of the candidate set.
The original formulation defines the kernel on the $SE(3)$ manifold of end-effector poses; because
our policies emit joint-space action chunks, we apply an isotropic Gaussian kernel in the normalized
action space. This rule is a mode-seeking heuristic: it favors the most typical candidate and is
therefore risk-averse, but it is uninformed by task outcome and cannot prefer a rare candidate even
when the majority mode leads to failure.

\paragraph{MG-Select~\citep{jang2025verifier}.}
MG-Select is also verifier-free and ranks candidates by how strongly the conditioning information supports
them, in the spirit of classifier-free guidance. In its original autoregressive instantiation the
score is the log-likelihood gap between the conditioned and condition-masked policy,
$\log\pi_{\mathrm{pre}}(\mathbf{a}\mid s_t)-\log\pi_{\mathrm{pre}}(\mathbf{a}\mid \tilde{s}_t)$,
where $\tilde{s}_t$ denotes the observation with the conditioning signal masked out. Our
flow-matching policies expose no token likelihoods, so we realize this quantity as a kernel-density
ratio. We draw an auxiliary reference set of $M=N$ chunks from the masked-conditioning policy,
$\tilde{\mathbf{a}}_t^{(m)}\sim\pi_{\mathrm{pre}}(\cdot\mid\tilde{s}_t)$, estimate both densities
with the kernel of~\Cref{eq:kdpe}, and score
\begin{equation}
g_{\mathrm{MGS}}(s_t,\mathbf{a}_t^{(i)})
=\log\widehat{p}_h\!\left(\mathbf{a}_t^{(i)}\mid s_t\right)
-\log\widehat{p}_h\!\left(\mathbf{a}_t^{(i)}\mid \tilde{s}_t\right),
\end{equation}
which is a sample-based estimate of the pointwise mutual information between the candidate and the
conditioning. The selected chunk is the one whose trajectory-space density is most amplified by
conditioning. We use the same kernel and bandwidth rule as KDPE so that the two verifier-free
baselines differ only in the use of the masked reference set.

\paragraph{TACO~\citep{yang2025steering}.}
TACO casts test-time selection as anti-exploration: rather than seeking novelty, it prefers
candidates with high pseudo-count under the offline data distribution. Following the original
formulation, we instantiate the pseudo-count with random network distillation. A target network
$\bar{f}:\mathbb{R}^{d}\!\to\!\mathbb{R}^{m}$ is randomly initialized and frozen, and a predictor
$f_\psi$ is fit on the rollout dataset by
\begin{equation}
\min_{\psi}\ \mathbb{E}_{(s,\mathbf{a})\sim\mathcal{D}}
\left\lVert f_\psi(z)-\bar{f}(z)\right\rVert_2^{2},
\end{equation}
so that the residual is small on in-distribution state--action pairs and large elsewhere. Candidates
are scored by the negative prediction error,
$g_{\mathrm{TACO}}(s_t,\mathbf{a}_t^{(i)})=-\lVert f_\psi(z_t^{(i)})-\bar{f}(z_t^{(i)})\rVert_2^{2}$.
We fit $f_\psi$ on the same action-expert representations used by our verifier, which removes the
representation space as a confounder. The key distinction from our method is supervision: TACO
measures support under the behavior distribution and never observes trajectory outcomes, so it
cannot distinguish an in-distribution action that reliably fails from one that succeeds.

\paragraph{SVM~\citep{tsao2026learning}.}
Success Visitation Matching Verifier (SVM) is the learned verifier most closely related to ours: it discriminates successful from failed
state--action visitation using trajectory-level outcome labels. Let $\rho^{+}$ and $\rho^{-}$ denote
the visitation densities induced by successful and failed trajectories. A discriminator
$D_\phi(s,\mathbf{a})\in(0,1)$ is trained with unweighted binary cross-entropy,
\begin{equation}
\mathcal{L}_{\mathrm{SVM}}(\phi)
=-\mathbb{E}_{(s,\mathbf{a})\sim\rho^{+}}\left[\log D_\phi(s,\mathbf{a})\right]
-\mathbb{E}_{(s,\mathbf{a})\sim\rho^{-}}\left[\log\!\left(1-D_\phi(s,\mathbf{a})\right)\right],
\end{equation}
and its logit is used as the score,
$g_{\mathrm{SVM}}(s_t,\mathbf{a}_t^{(i)})=\log\frac{D_\phi}{1-D_\phi}$, which estimates the log
success-visitation ratio $\log\rho^{+}/\rho^{-}$. Following the original implementation, we encode
the current image observation with a CNN, concatenate the resulting feature with the corresponding
raw action chunk, and train an MLP discriminator on the concatenated input. SVM therefore differs
from DIVER along exactly two axes: (i) the input space (image features with raw actions, rather than
the policy's action-expert representation), and (ii) the weighting of training samples (uniform over
all visited timesteps, rather than reweighted by decision criticality). Table~\ref{tab:component}
isolates these two factors.

\section{Additional Literature Survey}

\paragraph{Structured variability in human motor control.}
Human motor variability changes systematically with task structure rather than remaining uniform throughout movement.
Simple reaching exhibits highly stereotyped trajectories~\citep{morasso1981spatial,abend1982human}, while complex object manipulation recruits a higher-dimensional repertoire of hand coordination patterns than simple grasping~\citep{todorov2004analysis,west2025kinematic}.
Related robotics work further shows that structured variation across repeated human demonstrations can reveal task-relevant interaction points~\citep{ogawara2003extraction}.
Together, these findings support our view that different parts of a trajectory are not equally informative, motivating greater emphasis on decision-critical states during verifier learning.

\paragraph{Non-uniform supervision and temporal credit assignment.}
Related work in reward modeling, failure detection, and sequential decision making highlights the value of learning signals that distinguish individual steps within a trajectory.
Process reward models provide feedback on intermediate steps rather than only final outcomes~\citep{lightman2024let,wang2024math,lu2025vla, li2025process, oh2026uncertainty}, while trajectory-supervised failure detection identifies temporally localized failure signals in VLA execution without step-level annotations~\citep{park2026hide}.
Similarly, \citet{arjona2019rudder} address temporal credit assignment by redistributing delayed rewards to earlier state--action pairs according to their estimated contributions to the return.
These approaches motivate moving beyond uniform supervision across a trajectory and emphasizing states where action selection has greater consequences for task success.

\section{Limitations and Future Work}

Our study has several limitations.
First, although we evaluate \textsc{DiVeR} in both simulation and on a real Franka Research 3 robot, the real-world experiments cover a limited set of manipulation tasks and do not fully characterize transfer across broader environments or robot embodiments.
Second, \textsc{DiVeR} selects among candidates proposed by a frozen policy, so its performance ultimately depends on the quality and diversity of the available action candidates.
Future work could explore broader real-world evaluation and stronger candidate generation mechanisms.

\end{document}